\documentclass[12pt]{article}
\usepackage{amssymb,amsmath,amsthm}
\usepackage{graphicx, color}

\usepackage{geometry}
\geometry{verbose,tmargin=1.00in,bmargin=1.00in,lmargin=1.00in,rmargin=1.00in}
\usepackage{setspace}
\usepackage{verbatim}
\setstretch{1.0025}
\usepackage{enumerate}
 
\newtheorem{theorem}{Theorem}[section]
\newtheorem{lemma}[theorem]{Lemma}

\newtheorem*{theorem*}{Theorem}
\newtheorem*{lemma*}{Lemma}

\newtheorem*{remark}{Remark}

\usepackage[pdftex,pagebackref=true,colorlinks]{hyperref}

\newcommand{\F}{\mathbb F}

\newcommand{\hh}{\mathcal{H}}
\newcommand{\Ex}{\mathop \mathbb{E}}
\newcommand{\Var}{\text{Var}}

\newcommand{\eps}{\epsilon}

\newcommand{\X}{\mathcal{X}}
\newcommand{\Y}{\mathcal{Y}}
\newcommand{\Z}{\mathcal{Z}}

\newcommand{\E}{\mathbb{E}}
\newcommand{{\precomp}}{{selection scheme}}

\renewcommand\H{{\cal H}}

\begin{document}

\title{On statistical learning via the lens of compression}

\author{
Ofir David\thanks{Department of Mathematics, Technion-IIT, Israel. {\tt eofirdavid@gmail.com}.}
\and Shay Moran\thanks{Departments of Computer Science, Technion-IIT, Israel, Microsoft Research Hertzelia, and Max Planck Institute for Informatics, Saarbr\"{u}cken, Germany. {\tt  shaymrn@cs.technion.ac.il.}}
\and Amir Yehudayoff\thanks{Department of Mathematics, Technion-IIT, Israel. {\tt amir.yehudayoff@gmail.com.} Research is supported by ISF and BSF.}
}

\date{}

\maketitle

\begin{abstract}
This work continues the study of the relationship
between sample compression schemes and statistical learning,
which has been mostly investigated within the framework of binary classification.
{The central theme of this work is establishing equivalences
between learnability and compressibility, 
and utilizing these equivalences in the study of statistical learning theory.}
%

We begin with the setting of {multiclass categorization} (zero/one loss).
We prove that in this case learnability is equivalent
to compression of logarithmic sample size, and {that}
uniform convergence implies compression of constant size.

{We then consider Vapnik's general learning setting:
we show that in order to extend the compressibility-learnability equivalence to this case, 
it is necessary to consider an approximate variant of compression.}

{Finally, we provide some applications of the 
compressibility-learnability equivalences:}
\begin{itemize}
\item Agnostic-case learnability and realizable-case learnability are equivalent in multiclass categorization problems
({in terms of sample complexity}).
\item  This equivalence {between agnostic-case learnability and realizable-case learnability does not hold} for general learning problems: There exists a learning problem whose loss function takes just three values, under which agnostic-case and realizable-case learnability are not equivalent.
\item
Uniform convergence implies compression {of constant size} in multiclass categorization problems. 
Part of the argument includes an analysis of the uniform convergence
rate in terms of the graph dimension, in which we improve upon previous bounds.
\item 
A dichotomy for sample compression in multiclass categorization problems:
{If} a non-trivial compression exists 
then a compression {of} logarithmic size exists.
\item A compactness theorem for multiclass categorization problems.
\end{itemize}
\end{abstract}

\thispagestyle{empty}

\newpage

\thispagestyle{empty}

\begin{center}
\includegraphics[width=.8\textwidth,height=.8\textheight]{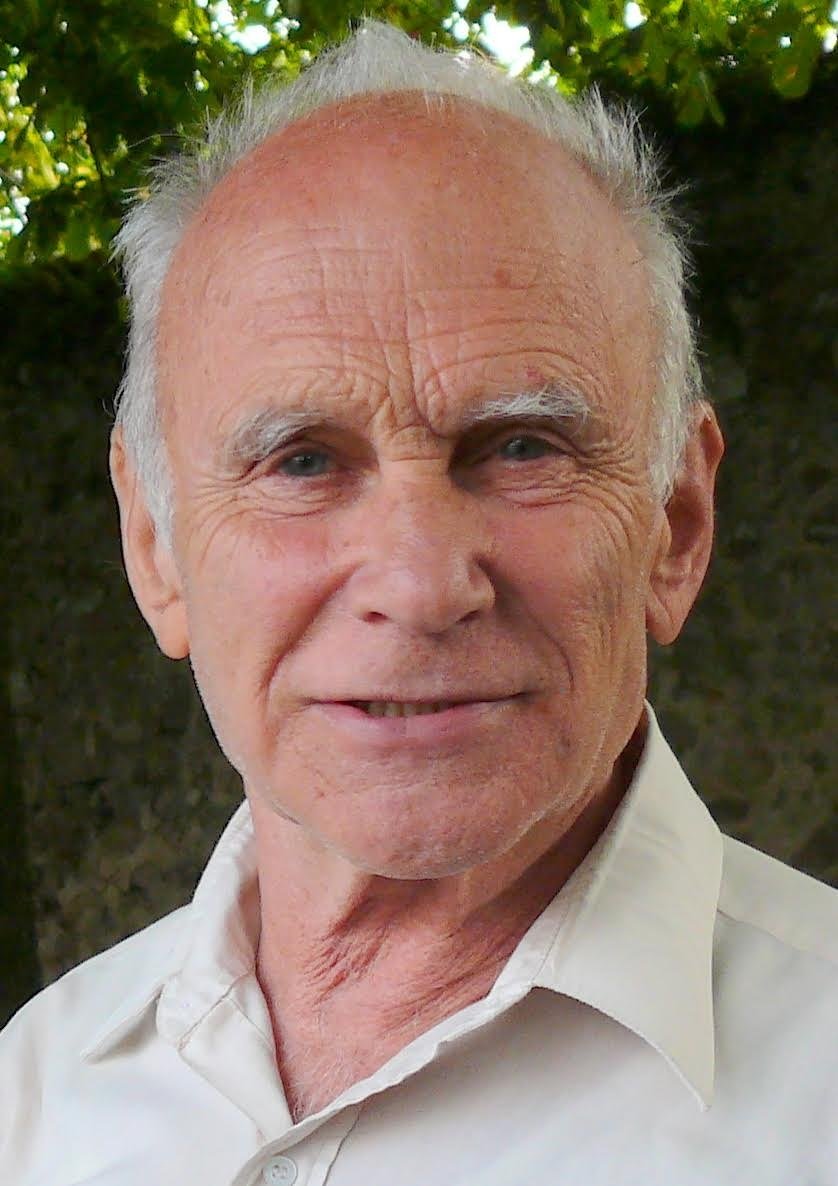}
\end{center}

\

\begin{center}
The second author dedicates this work to the memory of Gadi Moran (16.05.38 | 01.01.16)
for exposing him to the beauty of mathematics, and for insightful discussions about Ramsey Theory in the context of this work.
\end{center}
\newpage

\setcounter{page}{1}

\section{Introduction}

This work studies statistical learning theory
using the point of view of compression.
The main theme in this work {is} 
{{\it establishing
equivalences between learnability and compressibility,
and {making an effective use of these equivalences} to study statistical learning theory.}

{
In a nutshell, the usefulness of these equivalences stems from that
compressibility
is a combinatorial notion, while learnability is a statistical notion.
These equivalences, therefore, 
translate statistical statements to combinatorial ones
and vice versa.
This translation helps to reveal properties
that are otherwise difficult to find,
and highlights useful guidelines for designing learning algorithms.}
%


We first consider the setting of {\it multiclass categorization}, which is used to
model supervised learning problems using the zero/one loss function, 
and then move to {\it Vapnik's general learning setting} \cite{DBLP:books/daglib/0097035},
which models many supervised and unsupervised learning problems.}
Readers that are not familiar with the relevant definitions
are referred to Section~\ref{sec:prel}.

\paragraph{Multiclass categorization (Section~\ref{sec:zoLoss}).}
This is the setting in which sample compression schemes were defined by Littlestone and Warmuth~\cite{littleWarm},
as an abstraction of a common property of many learning algorithms.
For more background on sample compression schemes,
see e.g.~\cite{littleWarm,DBLP:conf/colt/Floyd89,DBLP:journals/ml/FloydW95,Shalev-Shwartz}.

We use an agnostic version of sample compression schemes,
and show that learnability 
is equivalent to some sort of compression.
{More formally, that any learning algorithm can be transformed
to a compression algorithm, compressing a sample of size $m$ to a sub-sample of size
roughly $\log(m)$},
and that such a compression algorithm implies learning.
This statement is based on arguments that appear in~\cite{littleWarm,DBLP:journals/iandc/Freund95,schapire2012boosting}.
We conclude this part by describing some applications:

(i)~Equivalence between PAC and agnostic PAC learning
from a statistical perspective {(i.e.\
in terms of sample complexity)}.
For binary-labelled classes,
this equivalence follows from basic arguments in Vapnik-Chervonenkis (VC) theory,
but these arguments do not seem to extend when the number of labels is large.

(ii)~A dichotomy for sample compression |
if a non-trivial compression exists (e.g.\ compressing a sample of size $m$ to 
a sub-sample of size $m^{0.99}$),
then a compression to logarithmic size exists (i.e.\ to 
a sub-sample of size roughly $\log m$). 
{This dichotomy is analogous to the known dichotomy
concerning the growth function of binary-labelled classes: 
the growth function is either polynomial (when the VC dimension is finite), or exponential (when the VC dimension is infinite).}

(iii)~Compression to constant size versus uniform convergence |
every class with
the uniform convergence property has a compression of constant size.
The proof has two parts.
The first part, which is based on
arguments from~\cite{DBLP:journals/eccc/MoranY15},
shows that finite graph dimension (a generalization of VC dimension for multiclass categorization~\cite{Natarajan89}) implies compression of constant size.
The second part, which uses ideas from
~\cite{BenDavid95,zbMATH03391742,DBLP:journals/corr/DanielySBS13}, 
shows that the uniform convergence rate is captured by the graph dimension.
In this part we improve upon the previously known bounds.

(iv)~Compactness for learning |
if finite sub-classes of a given class are learnable,
then the class is learnable as well.
Again, for binary-labelled classes,
such compactness easily follows from known
properties of VC dimension.
For general multi-labeled classes we
derive this statement using a corresponding compactness
property for sample compression schemes,
based on the work by~\cite{DBLP:journals/dam/Ben-DavidL98}.

\paragraph{General learning setting (Section~\ref{sec:GenLoss}).}
We continue with investigating general loss functions.
This part begins with a simple example
in the context of linear regression, showing that
for general loss functions,
learning is not equivalent to compression.
We then consider an approximate variant of compression schemes,
which was used by~\cite{DBLP:journals/ml/GraepelHS05,DBLP:journals/corr/GottliebK15} 
in the context of classification,
and observe that learnability is equivalent to possessing an
approximate compression scheme,
whose size is roughly the statistical sample complexity.
This is in contrast to
(standard) sample compression schemes,
for which the existence of such an equivalence
(under the zero/one loss) 
is a long standing open problem, even in the case of binary classification~\cite{DBLP:conf/colt/Warmuth03}.
We conclude the paper by showing that |
unlike for zero/one loss functions |
for general loss functions,
PAC learnability and agnostic PAC learnability are {\em not} equivalent.
In fact, this is derived for a loss function that takes just three values.
The proof of this non-equivalence uses Ramsey theory
for hypergraphs.
{The combinatorial nature of compression schemes
allows to clearly identify the place where Ramsey theory
is helpful.
More generally, the study of statistical learning theory via the lens
of compression may shed light on additional useful connections with different fields of mathematics.}

\paragraph{Selection schemes.}
We begin our investigation by breaking the definition of sample
compression schemes into two parts.
The first part (which may seem useless at first sight)
{is about {\it selection schemes}. These are learning algorithms whose output hypothesis depends on
a selected small sub-sample of the input sample.
The second part of the definition is the sample-consistency guarantee;
so, sample compression schemes are selection schemes
whose output hypothesis is consistent with the input sample.}
We then show that selection schemes of small size do not overfit
in that their empirical risk is close to their true risk.
{Roughly speaking, this shows that for selection
schemes there are no surprises:
``what you see is what you get''}.

\section{Preliminaries}
\label{sec:prel}

The definitions we use are based on the textbook by~\cite{Shalev-Shwartz}.

\subsubsection*{Learnability and uniform convergence}
{
A learning problem is specified by a set $\hh$ of hypotheses,
a domain $\Z$ of examples, and a loss function $\ell:\hh\times\Z\rightarrow\mathbb{R}^+$.
To ease the presentation, we shall only discuss loss functions that are bounded from above by~$1$, although the results presented here can be extended to more general loss functions.
A sample $S$ is a finite sequence $S=(z_1,\ldots,z_m) \in \Z^m$.
A {\em learning algorithm} is a mapping that gets as an input a sample
and outputs an hypothesis $h$.

In the context of supervised learning, hypotheses are functions from a domain $\mathcal{X}$ to a label set $\mathcal{Y}$, and the examples domain is the cartesian product $\Z:=\X\times\Y$.
In this context, the loss $\ell(h,(x,y))$ depends only on $h(x)$ and $y$, 
and therefore in this case we model the loss as a function $\ell:\mathcal{Y}\times\mathcal{Y}\rightarrow\mathbb{R}^+$.
}
Examples of loss functions include:
\begin{description}
  \item[Multiclass categorization] The zero/one loss function
   $\ell(y_1,y_2)=\begin{cases} 0 &y_1=y_2 , \\ 1 &y_1\neq y_2 .\end{cases}$
  \item[Regression] The squared loss function over $\Y=[0,1]$
  is $\ell(y_1,y_2) = (y_1-y_2)^2$.
\end{description}

  \medskip

Given a distribution $\mathcal{D}$ on $\Z$,
the {\em risk} of an hypothesis $h:\mathcal{X}\rightarrow\mathcal{Y}$ is its expected loss:
$$L_{\mathcal{D}}(h) = \E_{z\sim\mathcal{D}}\left[\ell(h,z)\right].$$
Given a sample $S=(z_1,\ldots,z_m)$, the {\em empirical risk} of an hypothesis $h$
is
$$L_S(h) = \frac{1}{m}\sum_{i=1}^{m}{\ell(h,z)}.$$

An {\em hypothesis class} $\hh$ is a set of hypotheses.
A distribution $\mathcal{D}$ is {\em realizable} by $\hh$ if there exists $h\in\hh$ such that $L_\mathcal{D}(h)=0$.
A sample $S$ is {\em realizable} by $\hh$ if there exists $h\in\hh$ such that $L_S(h)=0$.

A hypothesis class $\hh$ has {\em the uniform convergence property}\footnote{We omit the dependence on the loss function $\ell$ from this and similar definitions,
since $\ell$ is clear from the context.} if
there exists a rate function $d:(0,1)^2\rightarrow\mathbb{N}$ such that
for every $\eps,\delta > 0$ and distribution $\mathcal{D}$ over $\Z$,
if $S$ is a sample of $m\geq d(\eps,\delta)$ i.i.d.\ pairs generated
by $\mathcal{D}$, then with probability at least $1-\delta$ we have
$$\forall h\in\hh \ ~ |L_{\mathcal{D}}(h)- L_S(h)| \leq  \eps.$$

The class $\hh$ is {\em agnostic PAC learnable} if there exists a learner $A$ and a rate function $d:(0,1)^2\rightarrow\mathbb{N}$ such that for every $\eps,\delta > 0$ and distribution $\mathcal{D}$ over $\Z$,
if $S$ is a sample of $m\geq d(\eps,\delta)$ i.i.d.\ pairs generated
by $\mathcal{D}$, then with probability at least $1-\delta$
we have
\begin{equation}\label{eq:agn}
L_{\mathcal{D}}(A(S)) \leq \inf_{h\in\hh}L_{\mathcal{D}}(h) + \eps.
\end{equation}
The class
$\hh$ is {\em PAC learnable} if condition~\eqref{eq:agn} holds for every realizable distribution
$\mathcal{D}$.
The parameter
$\eps$ is referred to as the {\em error} parameter and $\delta$ as the {\em confidence} parameter.

Note that the uniform convergence property implies agnostic PAC learnability with the same rate
via any learning algorithm which outputs $h\in\hh$
that minimizes the empirical risk, and that agnostic PAC learnability implies
PAC learnability with the same rate.

\subsubsection*{Selection and compression schemes}
The variants of sample compression schemes that are discussed in this paper,
are based on the following object, which we term {\em {\precomp}}. 
{We stress here that unlike sample compression schemes, selection schemes are not
associated with any hypothesis class.}

A {{\precomp}} is a pair $(\kappa,\rho)$ of maps for which the following holds:
\begin{itemize}
\item $\kappa$ is called the selection map. It gets as an input a sample $S$ and outputs
a pair $(S',b)$ where $S'$ is a sub-sample\footnote{That is,
if $S = (z_1,\ldots,z_m)$ then
$S'$ is of the form $(z_{i_1},\ldots,z_{i_\ell})$
for $1 \leq {i_1} < \ldots < {i_\ell} \leq m$.} of $S$  and $b$ is a finite binary string, which we think of as side information.
\item $\rho$ is called the reconstruction map. It gets as an input a pair $(S',b)$ of the same type as the output of $\kappa$
and outputs an hypothesis $h$.
\end{itemize}

The size of $(\kappa,\rho)$ on a given input sample $S$ is defined to be $|S'|+|b|$ where $\kappa(S)=(S',b)$.
For an input size $m$, we denote by $k(m)$ the maximum size of the {{\precomp}} on all inputs $S$ of size at most $m$.
{The function $k(m)$ is called the {\em size} of the selection scheme.}
If $k(m)$ is uniformly bounded by a constant, which does not depend on $m$, then we say that the {\precomp} has a constant size; otherwise, we say that it has a variable size.


The definition of {\precomp} is very similar
to that of sample compression schemes. 
The difference is that sample compression schemes are defined with respect to a fixed hypothesis class
with respect to which they
are required to have ``correct'' reconstructions whereas
selection schemes do not provide any correctness guarantee.
The distinction between the `selection' part and
the `correctness' part is helpful for our presentation,
and also provides some more insight into these notions.

A {\precomp} $(\kappa,\rho)$ is a {\em sample compression scheme} for $\hh$ if
for every sample $S$ that is realizable by $\hh$,
$$L_{S}\left(\rho\left(\kappa\left(S\right)\right)\right)=0.$$
A {\precomp} $(\kappa,\rho)$ is an {\em agnostic sample compression scheme} for $\hh$ if
for every sample $S$,
$$
L_{S}\left(\rho\left(\kappa\left(S\right)\right)\right)\leq \inf_{h\in\hh}L_{S}(h).
$$

In the following sections, we will see different manifestations of the statement ``compression $\Rightarrow$ learning''.
An essential part of these statements boils down to a basic property of selection schemes, that as long as $k(m)$ is sufficiently smaller than $m$, a selection scheme based learner does not overfit its training data in the sense that its risk is roughly its empirical risk.
For completeness we provide a proof of it in Section~\ref{s:appt11}.

\newcommand{\D}{{\cal D}}

\begin{theorem}[{\cite[Theorem 30.2]{Shalev-Shwartz}}]
\label{t11}
Let $(\kappa,\rho)$ be a {\precomp} of size $k=k(m)$, and let $A(S) = \rho\left(\kappa\left(S\right)\right)$.
Then, for every distribution $\D$ on $\Z$, integer $m$ such that $k\leq m/2$, and $\delta>0$, we have
$$\Pr_{S\sim \D^m}\left[\lvert L_\D\left(A\left(S\right)\right) - L_S\left(A\left(S\right)\right) \rvert\geq
\sqrt{\eps\cdot L_S\left(A\left(S\right)\right)} + \eps \right]\leq \delta,$$
where
$$\eps = 50\frac{k\log\left(m/k\right)+\log(1/\delta)}{m}.$$
\end{theorem}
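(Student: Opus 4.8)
The plan is to reduce the claim to a uniform convergence statement over a suitably small, data-independent family of hypotheses, and then invoke a standard multiplicative (relative) Chernoff-type deviation bound. First I would observe that although $A(S) = \rho(\kappa(S))$ may in principle be an arbitrary hypothesis, the fact that $\kappa$ outputs a sub-sample $S'$ of size at most $k$ together with a side string of total length at most $k$ means that $A(S)$ always lies in a restricted set: having fixed the $k$ indices $i_1 < \dots < i_\ell$ into $S$, the hypothesis $A(S)$ is a function only of $(z_{i_1},\dots,z_{i_\ell})$ and of the binary string $b$. The key point is that the resulting deviation $|L_\D(A(S)) - L_S(A(S))|$ should really be compared against the empirical risk measured on the $m - |S'|$ points \emph{outside} the selected sub-sample, which are genuinely i.i.d.\ and independent of the selected hypothesis.

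The main steps, in order, are as follows. (i)~Fix a set $I$ of at most $k$ indices and condition on $\kappa(S)$ selecting exactly the coordinates in $I$ (and on the value of the side string $b$); then the hypothesis $h = A(S)$ is determined, and the remaining $m - |I| \geq m/2$ examples $\{z_j : j \notin I\}$ are i.i.d.\ $\sim \D$ and independent of $h$. (ii)~For this \emph{fixed} $h$, apply a multiplicative Chernoff/Bernstein bound: for i.i.d.\ samples of size $n$ and a $[0,1]$-valued loss, $\Pr[|L_\D(h) - \widehat L(h)| \geq \sqrt{\alpha L_\D(h)} + \alpha]$ (or the version with $\widehat L(h)$ under the square root) is at most $2e^{-c \alpha n}$ for an absolute constant $c$; equivalently, with probability $1-\delta'$ the deviation is $O\!\big(\sqrt{\widehat L(h)\log(1/\delta')/n}+\log(1/\delta')/n\big)$. (iii)~Take a union bound over all choices of $I$ — there are at most $\sum_{\ell \le k}\binom{m}{\ell} \le (em/k)^{k}$ of them, contributing $k\log(m/k)$ to the exponent — and over the side string $b$, of which there are at most $2^{k}$ relevant values, contributing another $O(k)$; absorbing these into $\log(1/\delta)$ and tracking constants yields the stated $\eps = 50\,(k\log(m/k) + \log(1/\delta))/m$. (iv)~Finally, relate the empirical risk on the held-out points to $L_S(A(S))$, the empirical risk on the full sample: since $|S'| \le k \le m/2$, these differ by at most a factor of $2$ on the relevant scales, which is why the constant can be taken uniform (and large, like $50$). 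Throughout, the condition $k \le m/2$ is exactly what guarantees the held-out sample is at least half of $S$.

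The main obstacle is the self-referential dependence: $A(S)$ depends on $S$, so one cannot directly apply a concentration bound for a fixed hypothesis to $A(S)$. The resolution — conditioning on the selected index set $I$ to decouple $A(S)$ from the out-of-sample points, then union-bounding over the polynomially-many (in fact $\le (em/k)^k$) possible $I$'s — is the heart of the argument; this is precisely the sample-compression analogue of bounding a growth function, and it is the step one must get right both conceptually and in the bookkeeping of constants. The remaining work (choosing the right form of the relative Chernoff bound, converting between one- and two-sided statements, and passing from the held-out empirical risk to $L_S(A(S))$) is routine, and I would not grind through it here; the cited reference \cite[Theorem 30.2]{Shalev-Shwartz} carries out exactly this calculation.
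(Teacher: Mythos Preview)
Your proposal follows essentially the same route as the paper: define $h_{T,b}=\rho((z_i)_{i\in T},b)$ for every index set $T\subseteq[m]$ of size at most $k$ and side string $b$, apply a relative Chernoff bound (the paper's Lemma~\ref{l92}) to each fixed $h_{T,b}$ using the $m-|T|\ge m/2$ held-out coordinates, convert the held-out empirical risk to $L_S$, and union-bound over the at most $2^{k+1}(em/k)^k$ pairs $(T,b)$.

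One caution on your step~(i): do not literally ``condition on $\kappa(S)$ selecting exactly the coordinates in $I$.'' That event may depend on all of $S$ (nothing prevents $\kappa$ from looking at every coordinate before deciding which ones to output), and conditioning on it can destroy the i.i.d.\ structure of the held-out points. The paper avoids this trap by defining $h_{T,b}$ for \emph{every} $(T,b)$, proving the deviation bound for each such $h_{T,b}$ unconditionally (it depends only on the coordinates in $T$, so the rest are genuinely independent of it), and then observing that $A(S)$ always coincides with one of the $h_{T,b}$'s; the union bound is over hypotheses, not over conditioning events. With that rephrasing your sketch is correct.
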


\section{Multiclass categorization}
\label{sec:zoLoss}

In this section we consider the zero/one loss function, which models categorization problems.
We study the relationships between uniform convergence, learnability, and sample compression schemes under this loss.
Subsection~\ref{s21} establishes equivalence between learnability and compressibility of a sublinear size. 
In Subsection~\ref{s22} we use this equivalence to study the relationships between the properties of uniform convergence, PAC, and agnostic PAC learnability. In Subsection~\ref{s221} we show that agnostic  PAC learnability is equivalent to PAC learnability,
in Subsection~\ref{s222} we discuss the role sample compression schemes have in the context of boosting.
In Subsection~\ref{s223} we observe a dichotomy concerning the size of sample compression schemes, and use it to establish a compactness property of learnability.
Finally, in Subsection~\ref{s224} we study an extension of the Littlestone-Floyd-Warmuth conjecture concerning an equivalence between learnability and sample compression schemes of fixed size.

\subsection{Learning is equivalent to sublinear compressing}\label{s21}

The following theorem shows that
if $\hh$ has a sample compression scheme of size $k=o(m)$, then it is learnable.

\begin{theorem}[Compressing implies learning~\cite{littleWarm}]
\label{t21}
Let $(\kappa,\rho)$ be a {\precomp} of size $k$, let $\hh$ be an hypothesis class,
and let  $\D$ be a distribution on $\Z$.
\begin{enumerate}
\item If $(\kappa,\rho)$ is a sample compression scheme for $\hh$, and $m$ is such that $k(m)\leq m/2$, then
$$\Pr_{S\sim \D^m}\left(L_\D\left(\rho\left(\kappa\left(S\right)\right)\right)> 50\frac{k\log\frac{m}{k} + k +\log\frac{1}{\delta}}{m} \right)<\delta.$$
\item If $(\kappa,\rho)$ is an agnostic sample compression scheme for $\hh$, and $m$ is such that $k(m)\leq m/2$, then
$$\Pr_{S\sim \D^m}\left(L_\D\left(\rho\left(\kappa\left(S\right)\right)\right)> \inf_{h\in \hh}L_\D(h) + 100\sqrt{\frac{k\log \frac{m}{k} + k +\log\frac{1}{\delta}}{m}}\right)<\delta.$$
\end{enumerate}
\end{theorem}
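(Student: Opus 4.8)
The plan is to derive both parts of Theorem~\ref{t21} as corollaries of Theorem~\ref{t11}, which already tells us that a selection scheme of size $k(m)\le m/2$ does not overfit: with probability at least $1-\delta$ over $S\sim\D^m$,
$$\lvert L_\D(A(S)) - L_S(A(S))\rvert \le \sqrt{\eps\cdot L_S(A(S))} + \eps, \qquad \eps = 50\,\frac{k\log(m/k)+\log(1/\delta)}{m},$$
where $A(S)=\rho(\kappa(S))$. The only extra ingredient needed is a bound on the \emph{empirical} risk $L_S(A(S))$ coming from the compression (resp.\ agnostic compression) guarantee, after which each statement is a short estimate.

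For part~(1), the sample compression property gives $L_S(A(S))=0$ whenever $S$ is realizable by $\hh$; and since $\D$ is realizable, a sample drawn from $\D^m$ is almost surely realizable (there is $h^\ast\in\hh$ with $L_\D(h^\ast)=0$, hence $\ell(h^\ast,z_i)=0$ for all $i$ with probability $1$). So with probability $\ge 1-\delta$ we get $L_\D(A(S)) \le \sqrt{\eps\cdot 0}+\eps = \eps$. It remains to observe that $\eps = 50\frac{k\log(m/k)+\log(1/\delta)}{m} \le 50\frac{k\log(m/k)+k+\log(1/\delta)}{m}$, which is exactly the claimed bound (the extra $+k$ in the numerator only makes the right-hand side larger, so the stated inequality is implied — it is there to match the form used in applications). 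First I would state this reduction, then just plug in.

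For part~(2), the agnostic compression property gives $L_S(A(S)) \le \inf_{h\in\hh} L_S(h)$ for \emph{every} sample $S$. Now I would control $\inf_{h\in\hh}L_S(h)$ by $\inf_{h\in\hh}L_\D(h)$ up to an additive term: fixing an $h^\ast\in\hh$ that nearly attains $\inf_{h\in\hh}L_\D(h)$, the empirical mean $L_S(h^\ast)$ of the bounded random variables $\ell(h^\ast,z_i)\in[0,1]$ concentrates around $L_\D(h^\ast)$, so by a Chernoff/Hoeffding bound $L_S(h^\ast) \le L_\D(h^\ast) + O(\sqrt{\log(1/\delta)/m})$ with probability $\ge 1-\delta/2$; hence $L_S(A(S)) \le \inf_{h\in\hh}L_\D(h) + O(\sqrt{\log(1/\delta)/m})$. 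Combined with the no-overfitting bound of Theorem~\ref{t11} (invoked with confidence $\delta/2$), and using $\sqrt{\eps\cdot L_S(A(S))}+\eps \le \sqrt{\eps}\cdot(\sqrt{L_S(A(S))}+\sqrt{\eps}) $ together with $\sqrt{a+b}\le\sqrt a+\sqrt b$ and $L_S(A(S))\le \inf_h L_\D(h)+O(\sqrt{\cdot})\le 1$, everything collapses into a single $O(\sqrt{(k\log(m/k)+k+\log(1/\delta))/m})$ term; tracking the constants yields the factor $100$. The union bound over the two bad events (each of probability $<\delta/2$) gives the overall failure probability $<\delta$.

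The main obstacle is purely bookkeeping: getting the constants to line up so that the messy expression $\sqrt{\eps L_S(A(S))}+\eps + O(\sqrt{\log(1/\delta)/m})$ is genuinely bounded by $100\sqrt{(k\log(m/k)+k+\log(1/\delta))/m}$. The conceptual content is entirely in Theorem~\ref{t11}; the work here is to (i) note realizability of the sample in the realizable case, (ii) add one Hoeffding estimate in the agnostic case to pass from empirical to true infimum, and (iii) absorb the various square-root terms into one, using crude inequalities like $\sqrt{x}\le 1$ for the $x=L_S(A(S))\le 1$ factor and $\sqrt{a}+\sqrt{b}\le 2\sqrt{a+b}$. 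I would present (i)--(iii) as three short displayed inequalities and then state that the claimed bounds follow, leaving the constant-chasing to the reader.
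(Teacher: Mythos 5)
Your proposal is correct and follows essentially the same route as the paper: part (1) is Theorem~\ref{t11} with $L_S(\rho(\kappa(S)))=0$ (the sample being almost surely realizable), and part (2) combines Theorem~\ref{t11} at confidence $\delta/2$ with exactly the Chernoff--Hoeffding estimate that the paper isolates as Lemma~\ref{l22}, namely $L_S(\rho(\kappa(S)))\leq\inf_{h\in\hh}L_\D(h)+\sqrt{\log(1/\delta)/m}$ with probability $1-\delta$. The only difference is presentational (you inline the lemma and make the almost-sure realizability of $S$ explicit), so no further changes are needed.
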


\begin{proof}
The first item follows immediately from Theorem~\ref{t11} by plugging $L_S(\rho(\kappa(S)))=0$.

For the second item we need the following lemma that we prove in Section~\ref{app:l22}.
\begin{lemma}\label{l22}
For every distribution $\D$ on $\X\times\Y$, $m\in\mathbb{N}$, and $\delta>0$:
$$\Pr_{S\sim D^m}\left(L_S(\rho(\kappa(S))) \geq \inf_{h\in \hh}L_\D(h) + \eps_1(m,\delta)\right)\leq \delta,$$
where $\eps_1(m,\delta)=\sqrt{\frac{\log\frac{1}{\delta}}{m}}$.
\end{lemma}

Plugging $\delta\leftarrow\delta/2$ in Lemma~\ref{l22}, and Theorem~\ref{t11}
yields the second item.

\end{proof}

The following theorem shows that learning
implies compression.

\begin{theorem}[Learning implies compressing]
\label{t22}
Let $\hh$ be an hypothesis class.
\begin{enumerate}
\item If $\hh$ is agnostic PAC learnable with learning rate $d(\epsilon,\delta)$, then it is PAC learnable with the same learning rate.
\item If $\hh$ is PAC learnable with learning rate $d(\eps,\delta)$, then it has a sample compression scheme of size
$k(m)=O(d_0\log(m) \log\log (m) + d_0\log(m)\log(d_0))$,
where $d_0=d(1/3,1/3)$.
\item If $\hh$ has a sample compression scheme of size $k(m)$, then it has an agnostic sample compression scheme of the same size.
\end{enumerate}
\end{theorem}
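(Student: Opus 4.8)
The plan is to treat the three items separately, since each relies on a quite different mechanism. Items~1 and~3 are essentially structural observations, while item~2 is the substantive part and the expected main obstacle.

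For \textbf{item~1}, agnostic PAC learnability trivially implies PAC learnability: the agnostic guarantee~\eqref{eq:agn} holds for \emph{every} distribution $\D$, in particular for every realizable one, and when $\D$ is realizable $\inf_{h\in\hh}L_\D(h)=0$, so the same learner $A$ with the same rate $d(\eps,\delta)$ witnesses PAC learnability. (This is already noted in the preliminaries; the only content is to record it as part of the cycle of implications.)

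For \textbf{item~2}, the approach is the classical boosting-based construction of sample compression schemes from weak learners, going back to Freund--Schapire and Littlestone--Warmuth. First I would extract from PAC learnability a \emph{weak learner}: setting $\eps=\delta=1/3$, the learner $A$ uses $d_0 = d(1/3,1/3)$ examples and, on any realizable sample $S$, produces with constant probability a hypothesis with empirical error below $1/3$ on the distribution it was trained against. Given a realizable input sample $S$ of size $m$, run AdaBoost (or the Schapire--Freund hedge-style boosting for the multiclass zero/one loss) over the uniform distribution on the $m$ points of $S$: in each of $T = O(\log m)$ rounds, feed the current reweighted empirical distribution to the weak learner (derandomizing by repetition so that the weak hypothesis is obtained deterministically on a sub-sample of size $O(d_0\log d_0)$), obtain a weak hypothesis, and update weights. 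After $T=O(\log m)$ rounds the majority vote of the weak hypotheses is consistent with all of $S$. The selection map $\kappa$ outputs the union of the $T$ sub-samples used by the weak learner across the rounds — a sub-sample of size $O(T \cdot d_0 \log d_0) = O(d_0 \log m \log d_0)$ — together with side information encoding, for each retained example, which round(s) it was used in and the ordering/multiplicity needed to replay the run; this side information costs $O(\log m \log\log m)$ bits per example, giving total size $k(m) = O(d_0\log m\log\log m + d_0 \log m \log d_0)$ as claimed. The reconstruction map $\rho$ simply re-simulates boosting: it knows the weak learner $A$, so from the retained sub-samples (parsed via the side information) it recomputes the $T$ weak hypotheses and outputs their majority vote, which by construction has zero empirical loss on $S$, so $(\kappa,\rho)$ is a sample compression scheme for $\hh$. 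The main obstacle here is bookkeeping: one must verify that the weak learner can indeed be invoked on a \emph{weighted} empirical distribution (handled by the standard trick of simulating weights via sampling, at a mild cost folded into the $\log$ factors), that the derandomization cost is controlled, and that the side information is sufficient to deterministically reconstruct the exact sequence of weak hypotheses — this last point is why side information, not just a sub-sample, is needed in the definition of selection scheme.

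For \textbf{item~3}, I would use the standard reduction from the agnostic case to the realizable case at the level of compression. Given a sample compression scheme $(\kappa,\rho)$ for the realizable case of size $k(m)$, and an arbitrary (not necessarily realizable) sample $S$, let $h^\ast\in\hh$ be a hypothesis achieving (close to) $\inf_{h\in\hh}L_S(h)$, and let $S^\ast\subseteq S$ be the sub-sample of points on which $h^\ast$ is correct; then $S^\ast$ is realizable by $\hh$, so applying $(\kappa,\rho)$ to $S^\ast$ yields a hypothesis with zero empirical loss on $S^\ast$, hence empirical loss on all of $S$ at most $|S\setminus S^\ast|/|S| = L_S(h^\ast) \le \inf_{h\in\hh}L_S(h)$ (up to the slack in choosing $h^\ast$, which can be driven to zero since on a finite sample the infimum over achievable empirical error rates is attained). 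The agnostic selection map runs $\kappa$ on $S^\ast$; the subtlety is that the reconstruction map must know which sub-sample of $S$ was fed to $\kappa$, but since $\kappa(S^\ast)$ already returns a sub-sample of $S^\ast\subseteq S$, the output is automatically a sub-sample of $S$, and no extra side information beyond that produced by the original scheme is required — so the size is unchanged. One must also check that a minimizer of empirical error over $\hh$ exists or can be approached: on a sample of size $m$ the empirical loss takes values in $\{0,1/m,\ldots,1\}$, so the infimum is in fact a minimum, and $h^\ast$ can be chosen exactly.
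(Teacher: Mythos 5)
Your proposal is correct, and items~1 and~3 coincide with the paper's argument: item~1 is the trivial specialization of the agnostic guarantee to realizable distributions, and item~3 is exactly the paper's reduction (restrict $S$ to the sub-sample $\tilde S$ on which an empirical minimizer $h^\ast$ is correct, compress $\tilde S$, and use the zero/one structure to conclude $L_S(\tilde h)\leq L_S(h^\ast)$, noting as you do that the minimum is attained because empirical losses lie in $\{0,1/m,\ldots,1\}$). The only real divergence is in item~2. You run AdaBoost for $T=O(\log m)$ rounds against the uniform distribution on $S$, derandomizing each call to the weak learner and replaying the whole run from the retained sub-samples plus side information. The paper instead invokes von Neumann's minimax theorem on the zero-sum game between the learner (choosing a hypothesis $A(S')$ for a size-$d_0$ sub-sample $S'$ of $S$) and an adversary (choosing a point of $S$), obtains a mixed strategy $p$ under which every point of $S$ is correctly classified with probability at least $2/3$, and then draws $T=O(\log m)$ hypotheses from $p$ so that by Chernoff and a union bound a majority vote is consistent with all of $S$; the side information encodes the bipartite incidence between the $T$ rounds and the retained points, giving the same $O(d_0\log m\log\log m+d_0\log m\log d_0)$ bound. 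The two routes are duals of one another: yours is the algorithmic face (and is what the paper cites as the standard boosting proof), while the minimax route avoids the round-by-round reweighting and derandomization bookkeeping that you rightly flag as the delicate part of your version. One small accounting remark: you do not need repetition to derandomize a round; since a random draw of $d_0$ points from the current distribution succeeds with probability at least $2/3>0$, some fixed multiset of $d_0$ points of $S$ already works, so each round contributes only $d_0$ retained examples, and the $\log d_0$ and $\log\log m$ factors enter solely through the pointers in the side information.
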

\begin{proof}
The first item follows directly from the definition of agnostic and PAC learnability.
The second item can be proven by boosting (see, e.g.~\cite{schapire2012boosting}).
For completeness, in Section~\ref{app:t23} we present a proof of this part which is based on von Neumann's minimax Theorem~\cite{Neumann1928}.
The last item follows from the observation that under the zero/one loss function,
any sample compression scheme $(\kappa,\rho)$ can be transformed to an agnostic sample compression scheme without increasing the size.
Indeed, suppose $(\kappa,\rho)$ is a sample compression scheme for $\hh$. Now, given an arbitrary sample (not necessarily realizable) $S$, pick some $h^* \in \mathcal{H}$ that minimizes $L_S(h)$;
the minimum is attained since the loss function is zero/one.
Denote by $\tilde S$ the sub-sample of $S$ on which $h^*$ agrees with $S$, so that by definition $\tilde S$ is realizable. Therefore, since $(\kappa,\rho)$ is a sample compression scheme for $\hh$, it follows that $L_{\tilde S}(\rho(\kappa(\tilde S)))=0$. In other words, applying the compression scheme on $\tilde S$ yields an hypothesis $\tilde h=\rho(\kappa({\tilde S}))$ which agrees with $h^*$ on $\tilde S$. Since the loss function is zero/one loss function, $\tilde h$ cannot be worse than $h^*$ on the part of $S$ that is outside of $\tilde S$.
Hence, $L_S(\tilde h) \leq \min_{h\in \H}L_S(h)$ as required.
\end{proof}

\begin{remark}
{The third part in Theorem~\ref{t22} does not hold when the loss function is general. In Section~\ref{sec:GenLoss} we show that even if the loss function takes three possible values, then there are instances where a class has a sample compression scheme but not an agnostic sample compression scheme.}
\end{remark}

\subsection{Applications}\label{s22}

\subsubsection{Agnostic and PAC learnability are equivalent}\label{s221}\label{sec:PACeqAGN}
Theorems~\ref{t21} and \ref{t22} imply that if $\hh$
is PAC learnable, then it is agnostic PAC learnable.
Indeed, a summary of the implications between learnability and compression given by Theorems~\ref{t21} and \ref{t22} gives:

\begin{itemize}
\item An \underline{agnostic learner} with rate $d\left(\epsilon,\delta\right)$
implies a \underline{PAC learner} with rate $d\left(\epsilon,\delta\right)$.
\item A \underline{PAC learner} with rate $d\left(\epsilon,\delta\right)$ implies a
\underline{sample compression scheme} of size 
$k\left(m\right)=O\left(d_{0}\cdot\log\left(m\right)\log\left(d_{0}\cdot\log\left(m\right)\right)\right)$
where $d_{0}=d(1/{3},{1}/{3})$.
\item A \underline{sample compression scheme} of size $k\left(m\right)$ implies an \underline{agnostic
sample compression scheme} of size $k\left(m\right)$.
\item An \underline{agnostic sample compression scheme} of size $k\left(m\right)$ implies an \underline{agnostic learner} with error  $\epsilon\left(d,\delta\right)=100\sqrt{\frac{k\left(d\right)\log\frac{d}{k\left(d\right)}+k(d)+\log\frac{1}{\delta}}{d}}$.
\end{itemize}


Thus, for multiclass categorization problems, agnostic PAC learnability and PAC learnability are equivalent.
When the size of the label set $\Y$ is $O(1)$, this equivalence follows from previous works that studied extensions of the VC dimension to multiclass categorization problems~\cite{zbMATH03391742,zbMATH04143473,Natarajan89,BenDavid95}. These works show that PAC learnability and agnostic PAC learnability are equivalent to the uniform convergence property, and therefore any ERM algorithm learns the class. Recently,~\cite{DBLP:journals/corr/DanielySBS13}  separated PAC learnability and uniform convergence for large label sets by exhibiting PAC learnable hypothesis classes that do not satisfy the uniform convergence property.
In contrast, this shows that the equivalence between
PAC and agnostic PAC learnability remains {valid} even when $\Y$ is large.


\subsubsection{Boosting}\label{s222}
Boosting refers to the task of {efficiently} transforming a weak learning algorithm (say) with confidence $2/3$,
error $1/3$, and rate $d$ examples to a strong learner with confidence $1-\delta$ and error $\eps$,
for some prescribed $\eps,\delta>0$.
{Beside the computational aspect},
this task {also} manifests a statistical aspect.
The statistical aspect concerns the minimum number of examples $d(\eps,\delta)$ that are required
in order to achieve the prescribed confidence and error.

The computational aspect was studied extensively and merited the celebrated Adaboost algorithm
(see the book by~\cite{schapire2012boosting} and references therein).
The statistical aspect, at least when the label set is binary, follows from
basic results in VC theory that characterize the uniform
convergence rate in terms of the VC dimension and establish
equivalence between learnability and uniform convergence.
However, when $\Y$ is infinite,
uniform convergence and learnability
cease to be equivalent, and therefore these arguments do not hold.

Freund and Schapire (see~\cite{schapire2012boosting} and references therein) showed that Adaboost is in fact a sample compression scheme and that this fact implies boosting.
Unlike the equivalence between learning and uniform convergence that breaks for large $\Y$,
the equivalence between compression and learning remains valid, and therefore the implication ``compression $\Rightarrow$ boosting'' extends to an arbitrary $\Y$. Indeed,
Theorems~\ref{t22} and~\ref{t21} imply a boosting of a weak learning algorithm to a strong learning algorithm
with sample complexity of the form $poly(1/\eps,\log(1/\delta))$,
where the ``$poly$'' notation hides dependency on the sample complexity of the weak learner:
{Theorem~\ref{t22} shows that a weak learner yields a sample compression scheme, and Theorem~\ref{t21}
shows that a sample compression scheme yields a strong learner.}

\subsubsection{A dichotomy and compactness}\label{s223}

Let $\hh$ be an hypothesis class. Assume e.g.\ that $\hh$ has a sample compression scheme of size $m/500$ for some
large $m$.
Therefore, by Theorem~\ref{t21}, $\hh$ is weakly PAC learnable with confidence $2/3$, error $1/3$, and $O(1)$ examples.
Now, Theorem~\ref{t22} implies that $\hh$ has a sample compression scheme of size $k(m) \leq O(\log(m)\log\log(m))$.
In other words, the following dichotomy holds:
every hypothesis class $\hh$ either has a sample compression scheme of size $k(m)=O(\log (m)\log\log (m))$, or any sample compression scheme for it has size~$\Omega(m)$.

This dichotomy implies the following compactness property for learnability under the zero/one loss.

\begin{theorem}\label{t24}
Let $d\in\mathbb{N}$, and let $\hh$ be an hypothesis class such that each finite subclass of $\hh$
is learnable with error $1/3$, confidence $2/3$ and $d$ examples. Then $\hh$ is learnable
with error $1/3$, confidence $2/3$ and $O(d\log^2 (d)\log\log(d))$ examples.
\end{theorem}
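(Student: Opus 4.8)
The plan is to derive Theorem~\ref{t24} from the dichotomy together with the compactness property for sample compression schemes. First I would invoke the dichotomy established just above the theorem: since every finite subclass of $\hh$ is learnable with error $1/3$, confidence $2/3$, and $d$ examples, Theorem~\ref{t22} shows that each finite subclass $\hh_0 \subseteq \hh$ has a sample compression scheme of size $k(m) = O(d \log(m)\log(d\log m))$; crucially, this bound is \emph{uniform} over all finite subclasses, since it depends only on $d = d(1/3,1/3)$ and not on $\hh_0$ itself. So the task reduces to the following: given that every finite subclass of $\hh$ admits a sample compression scheme of a fixed size function $k(\cdot)$ (against the zero/one loss), conclude that $\hh$ itself admits one of essentially the same size.

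The second step is to establish this compactness property for sample compression schemes, which is the conceptual heart of the argument. The natural approach is a compactness / König's lemma style argument (or an ultrafilter/ultraproduct argument, as in the referenced work of Ben-David and Litman~\cite{DBLP:journals/dam/Ben-DavidL98}): for each finite subclass we have a compression scheme, and we want to ``glue'' these into a single scheme for $\hh$. Since for a fixed sample size $m$ the selection map $\kappa$ takes values in a finite set (there are boundedly many sub-samples together with side-information strings of bounded length) and the reconstruction map's behavior that matters is only its restriction to the finitely many points appearing in samples under consideration, one can use a compactness argument to extract, for each $m$, a ``limit'' selection scheme that is consistent with a compression scheme on every finite subclass simultaneously, and then check that this limit scheme is in fact a sample compression scheme for all of $\hh$: any sample realizable by $\hh$ is realizable by some finite subclass $\hh_0$, namely one containing a hypothesis witnessing realizability, and the limit scheme agrees with $\hh_0$'s scheme on that sample, hence achieves zero empirical loss.

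The final step is bookkeeping: once $\hh$ has a sample compression scheme of size $k(m) = O(d\log(m)\log(d\log m))$, apply Theorem~\ref{t21}(1) to turn it back into a learner. To get error $1/3$ and confidence $2/3$ we need $50\,\frac{k(m)\log(m/k(m)) + k(m) + \log 3}{m} \le 1/3$, which holds once $m = O(d\log^2(d)\log\log(d))$ after substituting the expression for $k(m)$ and simplifying the nested logarithms (here $\log(d \log m) = O(\log d)$ for $m$ in this range, which is what produces the clean $\log^2 d$ rather than $\log d \log\log d \cdot \log(\text{something})$). This yields the claimed sample complexity.

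The main obstacle is the compactness step for compression schemes. Unlike the binary-labeled case, where one appeals directly to finiteness of the VC dimension and standard growth-function arguments, here one must argue purely at the level of the combinatorial object $(\kappa,\rho)$. The delicate point is that $\rho$ outputs hypotheses (functions on all of $\X$), which is not a finite object, so one cannot naively apply König's lemma to the scheme as a whole; the fix is to observe that for verifying the sample-consistency condition on samples of size $m$, only the values of the reconstructed hypothesis on the at most $m$ domain points in the sample matter, so one can take a limit over an increasing chain of finite subclasses using only this finitary data, and a diagonalization over $m$ (or a single ultrafilter argument handling all $m$ at once) then produces the global scheme. Care is also needed to ensure the size function $k(m)$ is genuinely uniform across finite subclasses, which is why we fix the weak-learning parameters at $(1/3,1/3)$ before applying Theorem~\ref{t22}.
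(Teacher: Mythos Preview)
Your proposal is essentially the same three-step argument the paper gives: (i) pass from uniform weak learnability of finite subclasses to a uniform-size sample compression scheme for every finite subclass via Theorem~\ref{t22}; (ii) invoke a compactness property for sample compression schemes (\`a la Ben-David--Litman) to obtain a compression scheme of the same size for all of~$\hh$; (iii) convert back to a learner via Theorem~\ref{t21}.

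The one genuine difference is in how step~(ii) is carried out. The paper does not use K\"onig's lemma or an explicit ultrafilter construction; instead it encodes the statement ``$\hh$ has a sample compression scheme of size $k(m)$'' as a first-order theory over a language with constants for each $h\in\hh$, $x\in\X$, $y\in\Y$ and predicate symbols $\bar\rho_k$ for the reconstruction, and then applies the compactness theorem of predicate logic directly. This sidesteps the cardinality issue you flag: your ``increasing chain of finite subclasses plus diagonalization'' sketch, read literally as a K\"onig-style argument, would only handle countable $\hh$ (and countable $\X,\Y$), whereas the first-order compactness route works uniformly regardless of cardinality. Your parenthetical ultrafilter alternative would also work, and is of course what underlies the compactness theorem, so the distinction is ultimately one of packaging; but if you intend K\"onig's lemma specifically you should be explicit that you are assuming countability or else upgrade to the full compactness/ultraproduct argument.
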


When $\Y = \{0,1\}$, the theorem follows by the observing that
if every subclass of $\hh$ has VC dimension at most $d$, then
the VC dimension of $\hh$ is at most $d$.
We are not aware of a similar argument that applies for a general label set.
A related challenge, which was posed by~\cite{DBLP:conf/colt/DanielyS14},
is to find a ``combinatorial'' parameter, which captures multiclass learnability
like the VC dimension captures it in the binary-labeled case.

A proof of Theorem~\ref{t24} appears in Section~\ref{sec:t24proof}.
It uses an analogous\footnote{Ben-David and Litman proved a compactness result for sample compression schemes when $\Y=\{0,1\}$, but their argument generalizes for a general $\Y$.}
compactness property for sample compression schemes proven by~\cite{DBLP:journals/dam/Ben-DavidL98}.


\subsubsection{Uniform convergence versus compression to constant size}\label{s224}

Since the introduction of sample compression schemes by~\cite{littleWarm},
they were mostly studied in the context of binary-labeled hypothesis classes (the case $\Y = \{0,1\}$).
In this context, a significant number of works were dedicated to studying the relationship
between VC dimension and the minimal size of a compression scheme (e.g.~\cite{DBLP:conf/colt/Floyd89,DBLP:journals/siamcomp/HelmboldSW92,DBLP:journals/ml/FloydW95,DBLP:journals/dam/Ben-DavidL98,DBLP:journals/jmlr/KuzminW07,chernikovS,DBLP:journals/jcss/RubinsteinBR09,RR3,DBLP:conf/colt/LivniS13}).
Recently,~\cite{DBLP:journals/eccc/MoranY15} proved that any class of VC dimension $d$
has a compression scheme of size {exponential
in the VC dimension}.
Establishing whether a compression scheme of size linear (or even polynomial)
in the VC dimension remains open~\cite{DBLP:journals/ml/FloydW95,DBLP:conf/colt/Warmuth03}.

This question has a natural extension to multiclass categorization:
Does every hypothesis class $\hh$ have a sample compression scheme of size $O(d)$,
where $d=d_{PAC}(1/3,1/3)$ is the minimal sample complexity of a weak learner for $\hh$?
In fact, in the case of multiclass categorization it is open whether there is a sample compression
scheme of size depending only on $d$.

We show here that the arguments from \cite{DBLP:journals/eccc/MoranY15}
generalize to uniform convergence.

\begin{theorem}\label{t25}
Let $\hh$ be an hypothesis class with uniform convergence rate $d^{UC}(\eps,\delta)$.
Then $\hh$ has a sample compression scheme of size $\exp(d)$, where $d=d^{UC}(1/3,1/3)$.
\end{theorem}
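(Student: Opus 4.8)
The plan is to follow the two-step strategy that the paper itself signposts: first reduce uniform convergence to a finite combinatorial quantity—the graph dimension—and then invoke the compression construction of Moran--Yehudayoff for classes of bounded graph dimension. Recall that the graph dimension $d_G(\hh)$ is the VC dimension of the class of $0/1$ functions $(x,y)\mapsto \mathbf{1}[h(x)=y]$ indexed by $h\in\hh$. The first part of the argument is to show that if $\hh$ has uniform convergence rate $d^{UC}(\eps,\delta)$, then the graph dimension is finite and in fact bounded by a function of $d := d^{UC}(1/3,1/3)$. I would do this by a lower-bound (``no-free-lunch''-type) argument: if $d_G(\hh)$ were large, say some set $A=\{(x_1,y_1),\ldots,(x_k,y_k)\}$ is shattered in the graph sense, then the induced sub-family of indicator functions is all of $\{0,1\}^A$, and one can plant a distribution supported near $A$ on which the empirical risk of the best-fitting hypothesis deviates from its true risk by more than $1/3$ with constant probability unless the sample size is $\Omega(k)$. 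This forces $d \geq \Omega(d_G(\hh))$, i.e. $d_G(\hh) = O(d)$. This is precisely the place where the paper claims to improve the known bounds relating uniform convergence rate and graph dimension (via ideas from \cite{BenDavid95,zbMATH03391742,DBLP:journals/corr/DanielySBS13}), so I would be careful to get a clean quantitative relationship rather than just finiteness.

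The second part is to turn a bound on the graph dimension into a compression scheme of size $\exp(d_G(\hh))$. Here I would reduce to the binary case already handled in \cite{DBLP:journals/eccc/MoranY15}: the ``graph class'' $\hh_G = \{ (x,y)\mapsto \mathbf{1}[h(x)=y] : h\in\hh\}$ is a binary-labeled hypothesis class over the domain $\X\times\Y$ with VC dimension exactly $d_G(\hh)$, so by \cite{DBLP:journals/eccc/MoranY15} it admits a sample compression scheme of size $\exp(d_G(\hh))$. The remaining work is to transport a compression scheme for $\hh_G$ back to one for $\hh$ under the zero/one loss: given a sample $S=((x_1,y_1),\ldots,(x_m,y_m))$ realizable by $\hh$, it corresponds to the sample $((\, (x_i,y_i),\,1\,))_{i}$ which is realizable by $\hh_G$; compress this via the binary scheme to a sub-sample and side information, and design the reconstruction so that it outputs a hypothesis $h\in\hh$ consistent with the selected points. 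The one subtlety is that the binary scheme reconstructs a $0/1$ function, not an element of $\hh$; so the reconstruction map for $\hh$ must additionally encode (in the side information, or implicitly) enough to pin down an actual consistent $h\in\hh$ on the compressed sub-sample, which is possible because any sample realizable by $\hh$ is realized by some $h\in\hh$ that in particular is consistent on the sub-sample. Since the sub-sample already determines the labels to be matched, this adds only $O(1)$ bits, so the size stays $\exp(d)$.

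The main obstacle I anticipate is the first step: extracting the right quantitative bound $d_G(\hh) = O(d^{UC}(1/3,1/3))$ from the uniform convergence property. Merely knowing finiteness is not hard, but the paper advertises an \emph{improvement} over previous bounds, which means the distribution used to witness large deviation must be chosen cleverly—presumably not the uniform distribution on a shattered set, but a skewed one (à la \cite{zbMATH03391742,DBLP:journals/corr/DanielySBS13}) that amplifies the empirical-vs-true gap, so that $k$ shattered points already defeat $\Theta(k)$ samples rather than, say, $\Theta(k\log k)$ or $\Theta(k/\eps^2)$ samples at the fixed accuracy $\eps=1/3$. I would isolate this as a standalone lemma: ``uniform convergence at rate $d$ implies $d_G(\hh) \leq c\, d$ for an absolute constant $c$,'' prove it by the tailored lower-bound construction, and then combine with the transport argument above to conclude Theorem~\ref{t25}. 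Everything after the graph-dimension bound is essentially bookkeeping on top of \cite{DBLP:journals/eccc/MoranY15}.
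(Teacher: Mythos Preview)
Your two-step strategy matches the paper's: bound the graph dimension by $O(d^{UC}(1/3,1/3))$ via the lower bound in Theorem~\ref{t36}, then invoke the Moran--Yehudayoff construction at size $\exp(\dim_G(\hh))$. Two comments, one minor and one substantive.

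On Step~1: the paper does not use a skewed distribution. It places the uniform distribution on a set shattered in the graph sense and identifies $\max_{h\in\hh}|L_\D(h)-L_S(h)|$ with the statistical distance between the empirical type and the uniform law; the improvement over prior bounds comes from a sharp anti-concentration estimate for that statistical distance (Theorem~\ref{td1}), not from skewing the distribution.

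On Step~2: the black-box transport you propose has a real gap. Applying the binary scheme to $\hh_G$ yields a reconstruction $g:\X\times\Y\to\{0,1\}$ with $g(x_i,y_i)=1$ for every $(x_i,y_i)$ in the \emph{full} sample, but the reconstruction map you build sees only the sub-sample $S'$ and the side information. Outputting ``some $h\in\hh$ consistent with $S'$'' does not ensure consistency with the rest of $S$; that is precisely the learning problem you are trying to solve, and $O(1)$ extra bits cannot encode which of the (potentially infinitely many) $S'$-consistent hypotheses is correct on $S\setminus S'$. Likewise, defining $h(x)$ as any $y$ with $g(x,y)=1$ can fail because $g$ need not lie in $\hh_G$, so several $y$'s may qualify and the right one is not recoverable from $(S',b)$ alone.

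The fix is not to black-box the binary scheme but to run the Moran--Yehudayoff construction directly in the multiclass setting: the weak hypotheses $h_1,\ldots,h_T$ produced by ERM on small sub-samples are genuine members of $\hh$, the minimax and dual-Sauer steps go through with $\dim_G(\hh)$ in place of the VC dimension (since on any realizable sample the relevant $0/1$ matrix is governed by some $\hh_f$ with $VC(\hh_f)\le\dim_G(\hh)$), and the reconstruction is the \emph{plurality} vote of the $h_t$'s. Because each sample point $(x_i,y_i)$ is correctly labeled by a strict majority of the $h_t$'s, the plurality at $x_i$ is exactly $y_i$. This is what the paper means by ``the construction in~\cite{DBLP:journals/eccc/MoranY15} yields a sample compression scheme of size $\exp(\dim_G(\hh))$.''
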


The proof of this theorem uses the notion of the graph dimension, which was defined by~\cite{Natarajan89}.
For a function $f:\X\to \Y$ and $h\in\hh$, define $h_{f}:\X\to\left\{ 0,1\right\} $
to be
\[
h_{f}\left(x\right)=\begin{cases}
0\quad & h(x)=f(x),\\
1 & h(x) \neq f(x) .
\end{cases}
\]
Set $\hh_{f}=\left\{ h_{f} : h\in\hh\right\} $. The
{\em graph dimension} $\dim_G(\hh)$ of $\hh$ is $\sup_{f}VC(\hh_{f})$ where $f$ runs over all functions in $\Y^{\X}$
and VC indicates the VC dimension.
Note that for $\Y=\{0,1\}$, the graph dimension is exactly the VC dimension.

Theorem~\ref{t25} is proved using the following two ingredients.
First, the construction in~\cite{DBLP:journals/eccc/MoranY15} yields a sample compression scheme of size $\exp(\dim_G(\hh))$.
Second, the graph dimension determines the uniform convergence rate, similarly to that the VC dimension does it in the binary-labeled case.
\begin{theorem}\label{t36}
Let $\hh$ be an hypothesis class, let $d=\dim_G(\hh)$, and let $d^{UC}(\eps,\delta)$ denote the uniform convergence rate of $\hh$. Then, there exist constants $C_1,C_2$ such that
$$  C_1\cdot \frac{d+\log(1/\delta)-C_1}{\eps^2}\leq d^{UC}(\eps,\delta) \leq C_2 \cdot \frac{d \log(1/\eps) + \log(1/\delta)}{\eps^2}.$$
\end{theorem}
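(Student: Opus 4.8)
The plan is to prove the two inequalities separately, reducing the multiclass uniform convergence problem to the well-understood binary case via the family of binarizations $\{\hh_f : f \in \Y^\X\}$.

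For the \emph{lower bound}, I would exhibit a distribution for which the empirical and true risks of some $h \in \hh$ deviate by more than $\eps$. Fix $f$ achieving (up to a point) the supremum in the definition of the graph dimension, so that $\hh_f$ shatters a set $\{x_1,\dots,x_d\}$ in the binary sense. I would place a distribution on $\Z = \X \times \Y$ supported on pairs $(x_i, f(x_i))$, distributed so that the zero/one loss of $h$ on such a pair is exactly $h_f(x_i)$. Then the uniform convergence problem for $\hh$ under $\D$ becomes \emph{verbatim} the uniform convergence problem for the binary class $\hh_f$ on the shattered set under the pushforward distribution. Now I invoke the classical VC lower bound for binary classes (which gives $\Omega((d + \log(1/\delta))/\eps^2)$ for a class of VC dimension $d$; this is standard and can be cited from the VC-theory literature or from \cite{Shalev-Shwartz}). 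Because $\hh_f$ shatters $d$ points, its VC dimension is at least $d$, so the binary lower bound transfers directly, yielding the claimed $C_1 \cdot \frac{d + \log(1/\delta) - C_1}{\eps^2}$ lower bound on $d^{UC}(\eps,\delta)$.

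For the \emph{upper bound}, I would go the other direction: express the multiclass uniform convergence event as a union over all binarizations. The key observation is that for any $h \in \hh$ and any distribution $\D$, writing $f$ for (a choice of) a "reference labelling" and using that $\ell(h,(x,y)) = \mathbf{1}[h(x)\neq y]$, one has $L_\D(h) = \E_{(x,y)\sim\D}[\mathbf{1}[h(x)\neq y]]$, which is the $\D$-probability of the event $h_y(x)=1$ — but since $y$ itself varies, the cleanest route is to note that the multiclass loss class $\{(x,y)\mapsto \mathbf{1}[h(x)\neq y] : h\in\hh\}$ is contained in the "disagreement" class generated by $\hh$ together with the projection onto $y$, and this composite class has VC dimension $O(\dim_G(\hh))$. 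Concretely, for a fixed sample the number of behaviours of the loss-indicator functions is controlled by a Sauer–Shelah argument applied to the binarizations $\hh_f$ where $f$ ranges only over the (finitely many, on a fixed sample) labellings realized by $\Y$-valued functions; this gives a growth-function bound of the form $m^{O(d)}$ with $d = \dim_G(\hh)$. Plugging this growth bound into the standard symmetrization / uniform-convergence machinery (again citable from \cite{Shalev-Shwartz} or \cite{BenDavid95,zbMATH03391742,DBLP:journals/corr/DanielySBS13}) yields $d^{UC}(\eps,\delta) \leq C_2 \cdot \frac{d\log(1/\eps) + \log(1/\delta)}{\eps^2}$.

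The main obstacle is the upper bound, specifically controlling the growth function of the multiclass loss class in terms of $\dim_G(\hh)$ without losing a $\log|\Y|$ factor (which would be fatal when $\Y$ is infinite). The naive bound takes a union over all $f \in \Y^\X$, which is uncontrolled; the fix is that on a fixed sample $S = ((x_1,y_1),\dots,(x_m,y_m))$ only the single labelling $f(x_i) := y_i$ is relevant for computing empirical losses, so the relevant binary class is just $\hh_f$ for \emph{that} $f$, whose VC dimension is at most $\dim_G(\hh)$ by definition, and Sauer–Shelah then gives the $m^{O(d)}$ bound cleanly. Making this reduction precise — and checking that the $\log(1/\eps)$ (rather than a worse factor) genuinely comes out, which is where the improvement over prior bounds lies, presumably via a chaining or a one-sided Bernstein refinement rather than the crude Hoeffding union bound — is the technical heart of the argument. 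I would present the lower bound first (it is short), then develop the growth-function reduction, then quote the final symmetrization step.
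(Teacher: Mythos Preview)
Your approach is correct and closely tracks the paper's. For the upper bound, your reduction is exactly what the paper does: on a fixed double sample $S$ of size $2m$, the loss class $\{i \mapsto 1[h(x_i)\neq y_i] : h \in \hh\}$ has VC dimension at most $\dim_G(\hh)$ (precisely your observation that only the single labelling $f(x_i):=y_i$ matters), and Sauer plus symmetrization finishes. Your worry about needing chaining or a Bernstein refinement is misplaced: plain Hoeffding (for sampling without replacement) together with the Sauer bound $(2me/d)^d$ already yields the $\log(1/\eps)$ factor, since solving $d\log(2me/d) - m\eps^2/8 \leq \log\delta$ for $m$ gives $m = O\bigl((d\log(1/\eps)+\log(1/\delta))/\eps^2\bigr)$ directly.

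For the lower bound, your reduction to the binary class $\hh_f$ is the same first step the paper takes (a distribution supported on $(x_i,f(x_i))$ so that $L_\D(h)$ and $L_S(h)$ become the true and empirical means of $h_f$). The difference is that you then \emph{cite} the binary $\Omega((d+\log(1/\delta))/\eps^2)$ uniform-convergence lower bound, whereas the paper \emph{proves it from scratch}: after the reduction, $\max_{h\in\hh}|L_\D(h)-L_S(h)|$ equals the statistical distance between the empirical type $\hat p$ and the uniform distribution on $d$ points, and the paper establishes a tight anti-concentration bound for this quantity (Theorem~\ref{td1}, via an entropy argument for binomial anti-concentration in Lemma~\ref{ld2}). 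Since the paper presents the $\eps^{-2}$ dependence in the lower bound as its improvement over prior work, your citation route is a legitimate shortcut but bypasses what the paper treats as its technical contribution here.
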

Parts of this result are well-known and appear in the literature: 
The upper bound follows from
Theorem 5 of~\cite{DBLP:journals/corr/DanielySBS13},
and the core idea of the argument dates back to the articles 
of~\cite{BenDavid95} and of~\cite{zbMATH03391742}.
A lower bound with a worse dependence on $\eps$ follows from Theorem 9 of~\cite{DBLP:journals/corr/DanielySBS13}. 
We prove Theorem~\ref{t36} in Section~\ref{app:c26}.
Part of the argument is about proving tight anti-concentration
results for the binomial distribution.

%

\section{General learning setting}
\label{sec:GenLoss}

We have seen that in the case of the zero/one loss function, an existence of a sublinear sample compression scheme is equivalent to learnability. It is natural to ask whether this phenomenon extends to other loss functions. The direction
``compression $\implies$ learning'' remains valid for general loss functions. In contrast, as will be discussed in this section, the other direction fails for general loss functions.

However, a natural adaptation of sample compression schemes, which we term {\em approximate sample compression schemes}, allows the extension of the equivalence to arbitrary loss functions. 
Approximate compression schemes were previously studied in the context of classification (e.g.~\cite{DBLP:journals/ml/GraepelHS05,DBLP:journals/corr/GottliebK15}).
In Subsection~\ref{sec:linreg} we argue that in general sample compression schemes are not equivalent to learnability; specifically, there is no agnostic sample compression scheme for linear regression.
In Subsection~\ref{sec:approx} we define approximate sample compression schemes and establish their equivalence with learnability.

Finally, in Subsection~\ref{sec:separation} we use this equivalence to demonstrate classes that are PAC learnable but not agnostic PAC learnable. This manifests a difference with the zero/one loss function under which agnostic and PAC learning are equivalent (see~\ref{sec:PACeqAGN}). It is worth noting that the loss function we use to break the equivalence takes only three values
(compared to the two values of the zero/one loss function).

\subsection{No agnostic compression for linear regression}\label{sec:linreg}

We next show that in the setup of linear regression, which is
known to be agnostic PAC learnable, there is no agnostic sample compression scheme.
For convenience, we shall restrict the discussion to zero-dimensional linear regression.
In this setup\footnote{One may think of $X$ as a singleton.}, the sample consists of $m$ examples $S = (z_1,z_2,\ldots,z_m) \in[0,1]^m$, and the loss function
is defined by $\ell(h,z) = (h-z)^2$.
The goal is to find $h\in \mathbb{R}$ which minimizes $L_S(h)$.
The empirical risk minimizer (ERM) is exactly the average $h^* = \frac{1}{m}\sum_i z_i$,
and for every $h \neq h^*$ we have $L_S(h) > L_S(h^*)$. Thus, an agnostic sample compression scheme in this setup should compress
$S$ to a subsequence and a binary string of side information, from which the average of $S$ can be reconstructed.
We prove that there is no such compression.

\begin{theorem}\label{t:nocompression}
There is no agnostic sample compression scheme for zero-dimensional linear regression with size $k(m)\leq m/2$.
\end{theorem}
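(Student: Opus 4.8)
The plan is to argue by a counting/information-theoretic argument: an agnostic sample compression scheme of size $k(m) \le m/2$ would produce, for each sample $S \in [0,1]^m$, a pair $(S', b)$ from which a hypothesis $h = \rho(\kappa(S))$ is reconstructed with $L_S(h) \le \inf_{h'} L_S(h') = L_S(h^*)$, where $h^* = \frac{1}{m}\sum_i z_i$. Since in this setup $L_S(h) > L_S(h^*)$ for every $h \ne h^*$, this forces $\rho(\kappa(S)) = h^*$ exactly: the compression must recover the average of $S$ precisely. So the real claim to contradict is that the average of an arbitrary real sequence in $[0,1]^m$ can be reconstructed from a subsequence of length $\le m/2$ together with a finite binary string of total length $\le m/2$ (i.e. $|S'| + |b| \le m/2$).

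The key step is to restrict attention to a family of samples indexed by a single real parameter $t$, chosen so that distinct $t$ give distinct averages, yet the selection map is forced to repeat a sub-sample. Concretely, fix the last $m - \lceil m/2 \rceil$ coordinates to some constant (say $0$), and let the first $\lceil m/2\rceil$ coordinates all equal $t \in [0,1]$; call this sample $S_t$. Its average is $\frac{\lceil m/2\rceil}{m}\, t$, an injective function of $t$, so $\rho(\kappa(S_t))$ must be a different real number for each $t$, hence $\kappa(S_t)$ must take uncountably many distinct values as $t$ ranges over $[0,1]$. But the binary side information $b$ ranges over a countable set, and — this is the crucial point — the sub-sample $S'$ selected from $S_t$ consists of coordinates all of which are either $0$ or $t$; in fact any sub-sample of $S_t$ is determined, as an element of $[0,1]^{|S'|}$, by how many of its entries equal $t$ versus $0$, together with the index set. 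Since $|S'| \le m/2 < \lceil m/2 \rceil$ (for $m$ not too small — the small cases can be checked directly or the constant tuned), there are only finitely many possible index sets and finitely many possible "patterns", so $\kappa(S_t)$ takes only finitely (or at worst countably) many values, contradicting that it must take uncountably many. Therefore no such scheme exists.

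I would carry this out in the following order: (1) observe that agnostic compression forces exact reconstruction of the average, using the strict inequality $L_S(h) > L_S(h^*)$ for $h \ne h^*$; (2) introduce the one-parameter family $S_t$ and note its average is an injective continuous function of $t$; (3) conclude $\rho(\kappa(S_t))$, and hence $\kappa(S_t) = (S'_t, b_t)$, must assume uncountably many values; (4) bound the number of possible outputs $(S'_t, b_t)$ by a countable quantity, using that $|S'_t| + |b_t| \le m/2$ and that the entries of $S_t$ take only two distinct values, one of which ($t$) cannot fully populate a length-$\le m/2$ sub-sample in a way that reveals $t$ — more carefully, a sub-sample of $S_t$ of length $\ell \le m/2$ either omits all the "$t$"-coordinates (and so is independent of $t$) or contains some, but there are only finitely many choices of which coordinates and the string $b_t$ is from a countable set, while the real number $t$ must be encoded in $(S'_t,b_t)$, and a finite sub-sample of reals drawn from $\{0,t\}$ together with a finite binary string cannot encode an arbitrary real $t$ in uncountably many distinguishable ways once we also fix that $|S'_t|$ is bounded; (5) derive the contradiction.

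The main obstacle is step (4): making precise why $\kappa$ cannot "hide" the parameter $t$ inside the selected sub-sample. The subtlety is that $S'_t$ does contain real numbers (copies of $t$), so in principle a single selected coordinate equal to $t$ already determines $t$ and hence the average. The resolution — and the reason the construction must be chosen carefully — is to arrange the family so that the average depends on $t$ but the selection map, to be consistent on all samples (not just the $S_t$), is forced into a configuration where the selected coordinates are all equal to the fixed value $0$; this can be enforced by making $t$ appear in only a $< k(m)$-fraction... but here $t$ appears in a $\ge 1/2$ fraction, so instead one should flip the design: put the constant $0$ in the first $\lceil m/2\rceil$ coordinates and the parameter $t$ in the remaining $\lfloor m/2 \rfloor$, noting the scheme must still recover the average which depends on $t$, yet now I must instead argue via a pigeonhole over a two-parameter or finite perturbation family that two samples with different averages get the same $(S',b)$. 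I expect the clean execution to use a slightly richer family (e.g. samples agreeing outside a small window, with the scheme forced by the size bound to select only from the common part) so that $\kappa(S) = \kappa(S')$ for samples with distinct averages, directly contradicting exact reconstruction; pinning down that family is the heart of the argument.
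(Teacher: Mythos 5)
Your step (1) is exactly right and matches the paper: since the ERM $h^*=\frac1m\sum_i z_i$ is the \emph{unique} minimizer of $L_S$, an agnostic compression scheme must reconstruct the average of $S$ exactly from $(S',b)$. Your overall strategy (exhibit two samples with distinct averages that receive the same compressed output) is also the right one. But there is a genuine gap at the step you yourself flag as ``the heart of the argument'': your one-parameter family $S_t$ cannot work, and no one-parameter continuous family can. As you observe, a single selected coordinate equal to $t$, together with $O(\log m)$ bits of side information recording the multiplicity of $t$ in $S_t$, lets $\rho$ recompute the average of every member of your family; so there is a size-$O(\log m)$ scheme that is correct on all the $S_t$, and the cardinality/countability argument collapses. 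Flipping which block carries $t$ does not help for the same reason. You end by conjecturing that some richer family exists but do not produce it, so the proof is incomplete.

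The missing idea in the paper is to replace the continuous family by a large \emph{combinatorial} family with a built-in injectivity guarantee. Take $\Omega\subseteq[0,1]$ of cardinality $M$ consisting of reals that are linearly independent over $\Q$. Then any two distinct finite subsets of $\Omega$ have distinct averages (equal averages would give a nontrivial rational dependence). The $\binom{M}{m}$ samples formed by $m$-element subsets of $\Omega$ therefore have pairwise distinct averages and hence must be mapped by $\kappa$ to pairwise distinct pairs $(S',b)$. But with $k=k(m)\le m/2$ the number of possible outputs on such inputs is at most $\binom{M}{\le k}\cdot 2^{k+1}$, and $\binom{M}{m}\big/\bigl(\binom{M}{\le m/2}2^{m/2+1}\bigr)\to\infty$ as $M\to\infty$, so for $M$ large enough the pigeonhole principle produces two samples with distinct averages and identical compressions --- the contradiction you were aiming for. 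The point you were missing is twofold: the family must be indexed by subsets of a ground set that can be made arbitrarily large (so the count of inputs beats the count of sub-sample/side-information pairs, which grows only like $M^{m/2}$ rather than $M^{m}$), and $\Q$-linear independence is what certifies that all these inputs really have distinct averages.
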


The proof idea is to restrict our attention to sets $\Omega\subseteq [0,1]$ for which every subset of $\Omega$ has a distinct average. It follows that any sample compression scheme for samples from $\Omega$ must
perform a compression that is information theoretically impossible.

\begin{proof}
Let $\Omega\subseteq [0,1]$ be a set of linearly independent numbers over $\mathbb{Q}$ of cardinality $M$.
Thus, for every two distinct subsets $A,B\subseteq \Omega$, the averages $a,b$ of the numbers in $A,B$ are distinct (otherwise a non trivial linear dependence over $\mathbb{Q}$ is implied).
It follows that there are $\binom{M}{m}$ distinct averages of sets of size $m$. On the other hand, the size of the image of $\kappa$
on such inputs is at most
${M \choose k(m)} \cdot 2^{k(m)}$.
Thus, for a sufficiently large $M$  there is no agnostic sample compression scheme of size $k(m)\leq m/2$.
\end{proof}

\subsection{Approximate sample compression schemes}\label{sec:approx}

The previous example 
suggests the question
of whether one can generalize the definition of compression to fit  problems where the loss function is not zero/one.
Taking cues from PAC and agnostic PAC learning, we consider the following definition.
We say that the selection scheme $(\kappa,\rho)$ is an {\em $\eps$-approximate} sample compression scheme for $\hh$ if
for every sample $S$ that is realizable by $\hh$,
$$L_{S}\left(\rho\left(\kappa\left(S\right)\right)\right) \leq\eps.$$
It is called an {\em $\eps$-approximate} agnostic sample compression scheme for $\hh$ if
for every sample $S$,
$$
L_{S}\left(\rho\left(\kappa\left(S\right)\right)\right) \leq \inf_{h\in\hh}L_{S}(h) +\eps.
$$

Let us start by revisiting the case of zero-dimensional linear regression.
Even though it does not have an agnostic compression scheme of sublinear size,
it does have an $\eps$-approximate agnostic sample compression scheme of size $k=O(\log(1/\eps)/\eps)$ which we now describe.

Given a sample $S=(z_1 ,\ldots, z_m)\in [0,1]$,
the average $h^*=\sum_{i=1}^{m}{z_i}/m$ is
the ERM of $S$.
Let
$$L^*=L(h^*)=\sum_{i=1}^{m}{z_i^2}/m - \left(\sum_{i=1}^{m}{z_i}/m\right)^2.$$
It is enough to show that there exists a sub-sample $S'= (z_{i_1},\ldots,z_{i_{\ell}})$ of size $\ell = \lceil1/\eps\rceil$ such that
$$L_S \left(\sum_{j=1}^{\ell}{z_{i_j}}/\ell \right)\leq L^* + \eps.$$
It turns out that picking $S'$ at random suffices.
Let $Z_1,\ldots,Z_{\ell}$ be independent random variables that are uniformly distributed over $S$ and let $H=\frac{1}{\ell}\sum_{i=1}^{\ell}{Z_i}$ be their average.
Thus, $\E[H] = h^*$ and
$$\E[L_S(H)] = L^* + \Var[H] \leq L^* + \eps.$$
In particular, this means that there exists some sub-sample of size $\ell$ whose average has loss at most $L^*+\eps$.
Encoding such a sub-sample requires $O(\log(1/\eps)/\eps )$ additional bits of side information.

We now establish the equivalence between approximate compression and learning (the proof is similar to the proof of Theorem~\ref{t21}).

\begin{theorem}[Approximate compressing implies learning]
Let $(\kappa,\rho)$ be a {\precomp} of size $k$, let $\hh$ be an hypothesis class,
and let $\D$ be a distribution on $\Z$.
\begin{enumerate}
\item If $(\kappa,\rho)$ is an $\eps$-approximate sample compression scheme for $\hh$, and $m$ is such that $k(m)\leq m/2$, then
$$\Pr_{S\sim \D^m}\left(L_\D\left(\rho\left(\kappa\left(S\right)\right)\right)> \eps + 100\sqrt{\frac{k\log \frac{m}{k} +\log\frac{1}{\delta}}{m}}\right)<\delta.$$
\item If $(\kappa,\rho)$ is an $\eps$-approximate agnostic sample compression scheme for $\hh$, and $m$ is such that $k(m)\leq m/2$, then
$$\Pr_{S\sim \D^m}\left(L_\D\left(\rho\left(\kappa\left(S\right)\right)\right)> \inf_{h\in \hh}L_\D(h) + \eps + 100\sqrt{\frac{k\log \frac{m}{k}+\log\frac{1}{\delta}}{m}}\right)<\delta.$$
\end{enumerate}
\end{theorem}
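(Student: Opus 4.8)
The plan is to reuse the proof of Theorem~\ref{t21} almost verbatim, replacing the exact guarantee $L_S(\rho(\kappa(S)))=0$ (resp.\ $L_S(\rho(\kappa(S)))\le\inf_{h\in\hh}L_S(h)$) by its $\eps$-approximate counterpart and absorbing the extra additive $\eps$ into the final bound. Throughout I write $A(S)=\rho(\kappa(S))$ and let $\eps_0=\eps_0(m,\delta)=50\bigl(k\log(m/k)+\log(1/\delta)\bigr)/m$ denote the quantity from Theorem~\ref{t11}, so that the target radicand $\bigl(k\log(m/k)+\log(1/\delta)\bigr)/m$ equals $\eps_0/50$. One may assume $\eps_0<1$, since otherwise $100\sqrt{\eps_0/50}>1\ge L_\D(A(S))$ and both inequalities are trivially true.

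For item~1 — which, as with item~1 of Theorem~\ref{t21}, concerns a realizable distribution $\D$ — I would first note that realizability forces $S\sim\D^m$ to be realizable with probability $1$: if $L_\D(h)=0$ then $\ell(h,\cdot)=0$ $\D$-almost surely, hence so is $L_S(h)$. Therefore $L_S(A(S))\le\eps$ almost surely, by the definition of an $\eps$-approximate sample compression scheme. Plugging $L_S(A(S))\le\eps\le 1$ into Theorem~\ref{t11} gives, with probability at least $1-\delta$, $L_\D(A(S))\le L_S(A(S))+\sqrt{\eps_0 L_S(A(S))}+\eps_0\le\eps+2\sqrt{\eps_0}$, and $2\sqrt{\eps_0}=2\sqrt{50}\cdot\sqrt{\eps_0/50}\le 100\sqrt{\eps_0/50}$ yields the claimed bound.

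For item~2 the only genuinely new ingredient is an $\eps$-approximate analogue of Lemma~\ref{l22}: for every $\D$, $m$, and $\delta>0$, $\Pr_{S\sim\D^m}\bigl(L_S(A(S))\ge\inf_{h\in\hh}L_\D(h)+\eps+\sqrt{\log(1/\delta)/m}\bigr)\le\delta$. I would prove this along the lines of Lemma~\ref{l22}: fix $\gamma>0$, choose $h^\star\in\hh$ with $L_\D(h^\star)\le\inf_{h\in\hh}L_\D(h)+\gamma$, apply Hoeffding's inequality to the i.i.d.\ $[0,1]$-valued variables $\ell(h^\star,z_i)$ to get $L_S(h^\star)\le L_\D(h^\star)+\sqrt{\log(1/\delta)/m}$ off an event of probability at most $\delta$, invoke the agnostic approximation guarantee $L_S(A(S))\le\inf_{h\in\hh}L_S(h)+\eps\le L_S(h^\star)+\eps$, and let $\gamma\to 0$ (the usual device for a possibly-unattained infimum). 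Then, applying this estimate and Theorem~\ref{t11} each with confidence $\delta/2$ and taking a union bound, with probability at least $1-\delta$ one obtains $L_\D(A(S))\le\inf_{h\in\hh}L_\D(h)+\eps+\sqrt{\eps_0'}+\eps_0'+\sqrt{\log(2/\delta)/m}$, where $\eps_0'=\eps_0(m,\delta/2)$ and where $L_S(A(S))\le 1$ is used to control the $\sqrt{\eps_0'L_S(A(S))}$ term. It then remains to check that $100\sqrt{\eps_0/50}$ dominates the bracketed error; this holds with room to spare (under $1\le k\le m/2$ one has $\log 2\le k\log(m/k)$, so $\eps_0'\le 2\eps_0$ and $\log(2/\delta)/m\le\eps_0/50$, making the three terms sum to at most roughly $25\sqrt{\eps_0/50}$, and in any case the inequality is trivial once the right-hand side exceeds $1$).

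I do not anticipate any real obstacle: the whole argument is a routine transcription of the proof of Theorem~\ref{t21}. The two points that need a little care are the $\gamma\to 0$ step in the approximate version of Lemma~\ref{l22}, and the constant bookkeeping in the union-bound step of item~2, where one verifies that the factor $100$ swallows the lower-order terms produced by the two applications of Theorem~\ref{t11} and of Hoeffding's inequality.
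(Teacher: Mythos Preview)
Your proposal is correct and follows exactly the approach the paper indicates (``the proof is similar to the proof of Theorem~\ref{t21}''): you invoke Theorem~\ref{t11} after plugging in the $\eps$-approximate empirical guarantee, and for the agnostic item you combine it with the obvious $\eps$-shifted analogue of Lemma~\ref{l22} via a union bound at confidence $\delta/2$. Your remark that item~1, as in Theorem~\ref{t21}, implicitly concerns realizable $\D$ is apt, and your constant bookkeeping is sound.
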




The following Theorem shows that every learnable class has an approximate sample compression scheme. The proof of this theorem is straightforward | in contrast with the proof of the analog statement in the case of zero/one loss functions and compression schemes
without error.

\begin{theorem}[Learning implies approximate compressing]
\label{t33}
Let $\hh$ be an hypothesis class.
\begin{enumerate}
\item If $\hh$ is PAC learnable with rate $d(\eps,\delta)$, then it has an $\eps$-approximate sample compression scheme of size $k \leq O(d \log(d))$ with $d = \min_{\delta<1}{d(\eps,\delta)}$.
\item If $\hh$ is agnostic PAC learnable with rate $d(\eps,\delta)$, then it has an $\eps$-approximate agnostic sample compression scheme of size $k \leq O(d \log(d))$ with $d = \min_{\delta<1}{d(\eps,\delta)}$.
\end{enumerate}
\end{theorem}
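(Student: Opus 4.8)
The plan is to run the given learner on a short i.i.d.\ sub-sample drawn from the uniform distribution on the input sample, and to let the selection map simply record that sub-sample together with enough side information to recover it exactly. First I would fix $\eps$ and choose a confidence parameter $\delta_0\in(0,1)$ for which $d(\eps,\delta_0)=\min_{\delta<1}d(\eps,\delta)=:d$; such a $\delta_0$ exists because the rate function $d(\eps,\cdot)$ is $\mathbb{N}$-valued, so its infimum over $(0,1)$ is attained, and crucially $1-\delta_0>0$. Let $A$ be a learner witnessing the rate $d(\cdot,\cdot)$.

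For the realizable case, given a sample $S=(z_1,\dots,z_m)$ realizable by $\hh$, I would consider the distribution $\D_S$ that is uniform on the $m$ coordinates of $S$; since $L_{\D_S}(h)=L_S(h)$ for every $h$, the distribution $\D_S$ is realizable by $\hh$ (take $h\in\hh$ with $L_S(h)=0$). Drawing $T\sim\D_S^{\,d}$, the PAC guarantee yields $L_S(A(T))=L_{\D_S}(A(T))\le\eps$ with probability at least $1-\delta_0>0$; hence \emph{some} sequence $T=(z_{i_1},\dots,z_{i_d})\in S^{\,d}$ satisfies $L_S(A(T))\le\eps$. The selection scheme is then defined by $\kappa(S)=(S',b)$, where $S'$ is the duplicate-free, index-increasing sub-sample of $S$ consisting of the distinct examples appearing in such a $T$, and $b$ is the length-$d$ word over the alphabet $\{1,\dots,|S'|\}$ recording, for each $j$, the position in $S'$ of the example $z_{i_j}$; and $\rho(S',b)$ decodes $T$ from $(S',b)$ (it knows the constant $d$ and reads $|S'|$ off its input) and outputs $A(T)$. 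By construction $L_S(\rho(\kappa(S)))\le\eps$ whenever $S$ is realizable, and the size is $|S'|+|b|\le d+d\lceil\log_2(d+1)\rceil=O(d\log d)$, uniformly in $m$. The agnostic case is verbatim the same, using only that $\D_S$ is an arbitrary distribution with $\inf_{h\in\hh}L_{\D_S}(h)=\inf_{h\in\hh}L_S(h)$ and invoking the agnostic guarantee, so that some $T$ achieves $L_S(A(T))\le\inf_{h\in\hh}L_S(h)+\eps$.

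The only point that needs a bit of care --- and the reason for the $\log d$ overhead over $d$ --- is that a sub-sample is, by definition, a duplicate-free ordered selection, whereas an i.i.d.\ draw from $\D_S$ may repeat coordinates and arrive in an arbitrary order; the $O(d\log d)$ bits of side information $b$ are precisely what is needed to reconstruct the actual sequence fed to $A$ from the set of its distinct elements. Beyond that bookkeeping there is no genuine obstacle, which is consistent with the claim that this direction is ``straightforward'', in contrast with the zero/one-loss analogue (Theorem~\ref{t22}), whose corresponding direction goes through boosting.
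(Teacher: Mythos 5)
Your proposal is correct and follows essentially the same route as the paper's proof: take the uniform distribution on the input sample, invoke the learner's guarantee to find at least one good tuple of $d$ (possibly repeated) examples, and spend $O(d\log d)$ bits of side information to encode the multiplicities and order so the reconstruction map can re-run the learner on exactly that tuple. Your added care about the minimum over $\delta<1$ being attained and about recovering the ordering of the i.i.d.\ draw are fine refinements of details the paper leaves implicit.
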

\begin{proof}
We prove the first item, the second item can be proven similarly.
Fix some $\eps,\delta>0$ and a sample $S=(z_1,\ldots,z_m)\in \Z^m$ with $m>d(\eps,\delta)$. Let $\D$ be the uniform probability on $S$. Since $A$ is a learning algorithm,
a fraction of $1-\delta$ of the sub-samples $S'$ of $S$ of size $d$ satisfy that $L_\D(A(S'))<\eps$.
In particular, there is at least one such tuple $S'$ if $\delta < 1$. Thus, the compression function picks this sub-sample $S'$, and the reconstruction function reconstruct the hypothesis $A(S')$ using the learning algorithm.
The sub-sample $S'$ may have multiplicities,
which are encoded using the side information
with at most $O(\log(d))$ bits per element of $S'$.
The overall size is at most $O(d \log(d))$.
\end{proof}

\subsection{A separation between PAC and agnostic learnability}\label{sec:separation}
Here we establish a separation between PAC and agnostic PAC learning under loss functions which take more than two values.
We first construct hypothesis classes $\hh$ for which the gap between the PAC and agnostic learning rates is arbitrarily large, and later construct a single hypothesis class which is PAC learnable and not agnostic PAC learnable.

{A central ingredient in our proof
is Ramsey theory for hypergraphs (see, e.g.~\cite{erdos2011combinatorial}).}

%
%
%

We begin with defining the hypothesis class.
Fix $M,K \in \mathbb{N}$ with $M$ sufficiently larger than $K$. Let $\X=\{1,\ldots,M\}$
and let $\Y$ be the family of all subsets of $\X$ of size at most $K$.
Let $\hh$ be the hypothesis class of all constant functions from $\X$ to $\Y$.
Define a loss function on $\Y$ as follows:
$$\ell\left(A,B\right)=\begin{cases}
0 & A=B,\\
1/2 & A\neq B,\;A\cap B\neq\emptyset,\\
1 & A\cap B=\emptyset.
\end{cases}$$

First observe that $\hh$ is PAC learnable,
since it has a sample compression scheme of size one;
any realizable sample is of the form $((x_1,A),\ldots,(x_m,A))$ for some $A\in Y$, and therefore can be compressed to
$(x_1,A)$. The reconstruction function then takes $(x_1,A)$ to the constant hypothesis $h \equiv A$.

To show that the agnostic sample complexity of $\hh$ is large, we show that any approximate agnostic compression scheme for it is large.
This is another demonstration of the usefulness
of the notion of approximate compression schemes.

\begin{lemma}\label{l34}
Any $1/4$-approximate agnostic compression scheme for $\hh$ has size at least $K/2$ on some sample of length $K$.
\end{lemma}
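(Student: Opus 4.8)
The plan is to exhibit a single "hard" sample $S$ of length $K$ on which any $1/4$-approximate agnostic compression scheme must output a hypothesis whose encoding is long, forcing the size to be at least $K/2$. Take $S = ((1,\{1\}),(2,\{2\}),\ldots,(K,\{K\}))$, i.e.\ the $i$-th example is $(i,\{i\})$ with the singleton label $\{i\}$. Since $\hh$ consists of all constant functions $h\equiv A$ with $|A|\le K$, the best fit to $S$ is attained by the constant function $h\equiv\{1,\ldots,K\}$, which has loss $1/2$ on every example (each label $\{i\}$ intersects $\{1,\ldots,K\}$ but is not equal to it), so $\inf_{h\in\hh}L_S(h)=1/2$. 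An agnostic $1/4$-approximate scheme applied to $S$ must therefore produce a hypothesis $h'=\rho(\kappa(S))$ with $L_S(h')\le 3/4$.

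The key step is a counting argument showing that, because of the choice of $S$, the reconstructed hypothesis already determines the full sub-sample $S'$ up to very little, so $S'$ itself must be large. More precisely, I first argue that the constant hypothesis $h'\equiv A$ output on $S$ must satisfy $|A\cap\{1,\ldots,K\}|\ge K/2$: each index $i$ with $i\notin A$ contributes loss $1$ to the example $(i,\{i\})$ (since $\{i\}\cap A=\emptyset$), and each $i\in A$ contributes at most $1/2$; hence $L_S(h')\ge \frac{1}{K}\big(|\{i\le K: i\notin A\}|\big)$, and $L_S(h')\le 3/4$ forces at least $K/4$ of the indices to lie in $A$ — actually, being a touch more careful (loss is $0$ only if $A=\{i\}$, which can happen for at most one $i$, so essentially every $i\le K$ in $A$ contributes exactly $1/2$), we get that the number of indices $i\le K$ with $i\notin A$ is at most $K/4$, i.e.\ $|A\cap\{1,\ldots,K\}|\ge 3K/4 \ge K/2$. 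Now consider the selection map on this one fixed sample $S$: $\kappa(S)=(S',b)$ where $S'$ is a sub-sample and $b$ a binary string, and $\rho(S',b)=h'\equiv A$. Since $S$ is fixed, the number of distinct pairs $(S',b)$ the scheme can possibly produce with $|S'|+|b|\le k$ is at most $\sum_{j\le k}\binom{K}{j}2^{k-j}\le 2^{2k}$ (each sub-sample is a subset of the $K$ coordinates of $S$, and $|b|\le k$). On the other hand, the reconstructed set $A$ must contain a given $\ge 3K/4$-subset of $\{1,\ldots,K\}$ and $|A|\le K$, so there is essentially one choice here — this direction does not immediately give a bound.

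So the cleaner route is to vary the sample rather than count reconstructions of one sample: fix the \emph{selection scheme} and feed it each of the $2^K$ samples $S_T$ obtained by permuting/sub-selecting which singletons appear, or more simply, for each subset $T\subseteq\{1,\ldots,M\}$ of size $K$ let $S_T=((t_1,\{t_1\}),\ldots,(t_K,\{t_K\}))$. For each such $S_T$, the same analysis shows the output hypothesis $h\equiv A_T$ satisfies $|A_T\cap T|\ge 3K/4$, and since $|A_T|\le K$ we can recover at least a $3K/4$-fraction of $T$ from $A_T$ alone; combined with the sub-sample $S'$ (a subset of the $K$ positions) and side information $b$, we recover $T$ exactly. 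Hence the map $S_T\mapsto(\kappa(S_T),A_T)$ is injective on a family of $\binom{M}{K}$ samples, while its range has size at most $\binom{M}{K}\cdot 2^{2k}\cdot(\text{number of }A\text{'s})$ — still not obviously a contradiction. I expect the actual argument in the paper is subtler, likely using that $A_T$ can be an \emph{arbitrary} $K$-set, so one instead lower-bounds $|S'|$ directly: to pin down $\ge 3K/4$ elements of $T$ one needs either to list them in $S'$ or encode them in $b$, and since $A_T$ is chosen by $\rho$ which only sees $(S',b)$, the pair $(S',b)$ must carry $\Omega(K\log(M/K))$ bits, forcing $|S'|+|b|\ge K/2$ once $M$ is large enough relative to $K$.

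The main obstacle is exactly this last point: turning "the reconstruction must recover most of $T$" into a clean lower bound of $K/2$ on $|S'|+|b|$, since $A_T$ lives in a large label alphabet and one must ensure the scheme cannot cheat by offloading information into the (single) $M$-ary labels attached to the few elements of $S'$. I would handle this by noting $|S'|\le k$ implies $S'$ carries at most $k\log M$ bits about $T$, $b$ carries $|b|$ bits, and recovering a $3K/4$-subset of an unknown $K$-subset of $[M]$ requires $\Omega(K\log(M/K))$ bits; choosing $M$ a suitable polynomial in $K$ makes $k\ge K/2$ the only consistent conclusion. This is the step I would write out in full detail.
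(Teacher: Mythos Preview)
There is a genuine gap. You implicitly assume that the reconstruction $h'=\rho(\kappa(S))$ is a constant function (an element of $\hh$), but the definition of an approximate agnostic compression scheme places no such restriction: $\rho$ may output \emph{any} function $\X\to\Y$. Your hard samples $S_T=((t_1,\{t_1\}),\ldots,(t_K,\{t_K\}))$ all have distinct $x$-coordinates, so the single non-constant hypothesis $h'(x)=\{x\}$ achieves loss $0$ on every example of every $S_T$, and it requires no information about $T$ whatsoever. A scheme could therefore compress each $S_T$ to the empty sub-sample with no side information and still satisfy the $1/4$-approximate agnostic guarantee on the entire family you consider; the information-theoretic counting you propose then yields no lower bound at all. (The single-sample attempt at the start is, as you partly note, also a non-starter: on one fixed sample there is nothing to count.)

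The paper's proof blocks this escape by taking all examples to share the \emph{same} $x$-coordinate, i.e.\ $S_A=((x,\{a_1\}),\ldots,(x,\{a_K\}))$. Now the reconstruction, whatever function it is, commits to a single set $R=h'(x)\in\Y$ with $|R|\le K$, and the empirical loss is controlled by $|R\cap A|$. The remaining difficulty---that $R$ may depend on the actual values $a_{i_j}$ appearing in the compressed sub-sample, not just on their positions---is handled via Ramsey's theorem for $K$-uniform hypergraphs: color each $K$-set $A=\{a_1<\cdots<a_K\}\subset[M]$ by the pair (set of \emph{positions} $\{i_1,\ldots,i_T\}$ selected by $\kappa$, side information $b$). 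With $k<K/2$ there are only finitely many colors, so for $M$ large enough there is a large monochromatic set $C$. On $C$ the compression is determined by positions alone; one can then first fix the $T\le k<K/2$ selected values, read off $R$ from $\rho$, and afterwards complete to a $K$-set $A\subset C$ whose intersection with $R$ is exactly those $T$ values, forcing $L_{S_A}(h')=\tfrac{1}{K}\bigl(T\cdot\tfrac12+(K-T)\cdot 1\bigr)\ge\tfrac34$. The Ramsey step is essential and has no counterpart in your sketch.
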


\begin{proof}
Let $(\kappa,\rho)$ be a selection scheme. Assume that the size $k=k(K)$ of $(\kappa,\rho)$ is less than $K/2$. We will show that $(\kappa,\rho)$ is not a $1/4$-approximate agnostic compression scheme for $\hh$.

 Consider samples of the form $S=((x,\{a_1\}),\ldots,(x,\{a_K\}))$ for some $x$ and $a_1<\cdots<a_K$. The constant function which takes the value $A=\{a_1,\ldots,a_K\}$ has empirical risk $1/2$, because $A$ intersects each set in the sample.
It thus suffices to show that
the empirical risk of $\rho(\kappa(S))$ on some sample
$S$ of this form is at least $3/4$.

Consider the complete $K$-regular hypergraph on the vertex set $\X$. We use $(\kappa,\rho)$ to color the hyperedges as follows. Given an hyperedge $A=\{a_1<\ldots<a_K\}$,
consider the sample $$S_A = ((x,\{a_1\}),\ldots,(x,\{a_K\})).$$
Its compression $\kappa(S_A)$
yields some subset $\{a_{i_1},\ldots,a_{i_T}\}$ of $A$ with $T\leq k$
and some bit string of side information $b$. Define the color of $A$ to be
$$c(A) = (\{i_1,\ldots,i_T\}, b).$$
The number of colors is at most $4^k$.
By assumption on the size of $(\kappa,\rho)$,
we know that $T \leq K/2$.

By Ramsey's theorem, if $M$ is sufficiently large, then there is monochromatic component $C\subseteq \X$ of size at least $(2K+2)\cdot(K/2 + 1)$.
In other words, every $A\subseteq C$ of size $K$ is compressed to the same $T$ coordinates with the same side information $b$.
Let $i_1,\ldots,i_T$ denote these coordinates.

Since $|C|$ is large, we can pick $T$ points $A'=\{a'_1<\ldots <a'_T\}\subset C$ that are far away from each other inside $C$:
$$|(0,a'_1)\cap C|,|(a'_1,a'_2)\cap C|,|(a'_2,a'_3)\cap C|,\ldots,|(a'_T,M+1)\cap C| > 2K.$$
Let
$$h = \rho(((x,\{a'_1\}),\ldots,(x,\{a'_T\})),b),$$
and let $R=h(x) \in \Y$.
By the choice of $A'$, since $|R| \leq K$, we can find some hyperedge $A=\{a_1<\ldots<a_K\} \subset C$ such that
$$\{a_{i_1},\ldots,a_{i_T}\}=A' = R \cap A.$$
It follows that for $S=((x,\{a_1\}),\ldots,(x,\{a_K\}))$,
we have $\rho(\kappa(S))=h$, and
$$L_S(h) = \frac{1}{K} \left( T \cdot \frac{1}{2}
+ (K-T) \cdot 1 \right) \geq \frac{3}{4}.$$
\end{proof}

There are several ways to use this construction to obtain an infinite PAC learnable class $\hh$
that has no $1/4$-approximate agnostic compression scheme (and therefore is not agnostic PAC learnable).
One option follows by taking $\X$ to be a sufficiently large infinite set, $\Y$ to be all
finite subsets of $\X$, and the same loss function.
The proof that the resulting class has no $1/4$-approximate agnostic compression scheme
is an adaptation of the proof of Lemma~\ref{l34} using infinite Ramsey theory:
the cardinality of $\X$ is chosen so that the following property holds: for every $n\in\mathbb{N}$, and every coloring
of the $n$-elements subsets of $\X$ in a finite number of colors, there is some uncountable monochromatic
subset of $\X$. Such $\X$ exists due to a more general result by Erd\"{o}s and Rado~\cite{erdos56}.
However, this result requires $\X$ to be huge; in particular, larger than any finite tower of the form  $2^{\cdot^{\cdot^{\cdot^{2^{\aleph_0}}}}}$.
It is interesting to note that if $\X$ is countable, say $\X=\mathbb{N}$, then the resulting $\hh$ is agnostic PAC learnable
by an agnostic sample compression scheme of size 1: if the input sample is not realizable then output the constant function that takes the value $\{0,\ldots,M\}$ where $M$ is the maximum number observed in the input sample. This set intersects every non empty set in the sample and therefore minimizes the empirical risk.
To conclude, if $\X$ is countable then $\hh$ is agnostic PAC learnable, and if $\X$ is huge (larger than any finite tower of the form $2^{\cdot^{\cdot^{\cdot^{2^{\aleph_0}}}}}$) then $\hh$ is not agnostic PAC learnable. It will be interesting to determine the
minimum cardinality of $\X$ for which $\hh$ is not agnostic PAC learnable.

One may find the usage of such huge cardinals unnatural in this context.
Therefore, we present an alternative approach of using Lemma~\ref{l34} to construct a countable PAC learnable class
that is not agnostic PA learnable. This uses the following general construction:
given hypothesis classes $\hh_1,\hh_2,\ldots$ with mutually disjoint domains $\X_1,\X_2,\ldots$ and mutually disjoint ranges $\Y_1,\Y_2,\ldots$ define an hypothesis class $\hh$ that is the union of the $\hh_i$'s.
To this end, extend the domain of each $h_i\in\hh_i$ to $\cup_{i}X_i$ by setting $h_i(x)=b$ for all $x\notin X_i$, where $b$ is 
some dummy element that does not belong to $\cup_i{Y_i}$. The loss function is extended so that $\ell(y_i,y_j)=1$ for $y_i,y_j$ from different copies, and $\ell(b,y)=1$ for all $y\neq b$.
Thus $\hh=\cup_i{\hh_i}$ is an hypothesis class with domain  
$\cup_i{\X_i}$ and range $\cup_i{\Y_i}\cup\{b\}$. One can verify that (i) if each $\hh_i$ has a realizable-case sample compression 
scheme of size $d$ then also $\hh$ has such a scheme, and that (ii) if $\hh$ has an approximate agnostic sample compression scheme of size $d$ then also each $\hh_i$ has such a scheme. Thus, picking the $\hh_i$'s according to Lemma~\ref{l34} so that $\hh_i$ has no $1/4$-
approx.\ sample compression scheme of size $i$, yields a countable PAC learnable class $\hh$ that is not agnostic PAC learnable.

\section{Discussion and further research}
The compressibility-learnability equivalence is a fundamental
link in statistical learning theory. 
From a theoretical perspective this link can serve as a guideline
for proving both negative/impossibility results, and positive/possibility results.

From the perspective of positive results,  
just recently,~\cite{DBLP:conf/colt/CummingsLNRW16} relied on this paper
in showing that every learnable problem is learnable with robust generalization guarantees.
Another important 
example appears in the work of 
boosting weak learners~\cite{schapire2012boosting} (see Chapter 4.2). 
These works follow a similar approach, that may be useful in other scenarios: 
(i) transform the given learner to a sample compression scheme, 
and (ii) utilize properties of compression schemes to derive the desired result.
The same approach is also used in this paper in Section~\ref{sec:PACeqAGN},
where it is shown that PAC learning implies agnostic PAC learning
under 0/1 loss; we first transform the PAC learner to a realizable compression scheme, 
and then use the realizable compression scheme to get an agnostic compression scheme
that is also an agnostic learner. 
We note that we are not aware of a proof that directly
transforms the PAC learner to an agnostic learner without using compression.

From the perspective of impossibility/hardness results, this link implies that
to show that a problem is not learnable, it suffices to show that it is not compressible. 
In Section~\ref{sec:separation},  
we follow this approach when showing that PAC and agnostic PAC learnability are not
equivalent for general loss functions. 

This link may also have a practical impact,
since it offers a thumb rule for algorithm designers;
if a problem is learnable then it can be learned by a compression algorithm, 
whose design boils down to an intuitive principle 
``find a small insightful subset of the input data.''
For example, in geometrical problems, 
this insightful subset often appears on the boundary of the data points
(see e.g.\ \cite{DBLP:journals/corr/GottliebK15}).

\section*{Acknowledgements}
{We thank Noga Alon for suggesting the outline of the proof of Theorem~\ref{td1} to us.}
\bibliographystyle{plain}
\bibliography{compRef}

\newpage
\appendix

\section{Selection schemes do not overfit}\label{s:appt11}
{Let $(\kappa,\rho)$ be a selection scheme. Let $S=(Z_1,\ldots,Z_m)$} be the input sample. For every bit string $b$ of length at most $k$ and $T\subseteq [m]$ of size at most $k(m)$,
let $h_{T,b}=\rho(\{Z_i:i\in T\},b)$. The proof strategy is to show that a fixed $h_{T,b}$ overfits  with very small probability.
Then, since $A(S)\in\{h_{T,b}:b\in \{0,1\}^{\leq k}, T\subseteq [m], |T|\leq k\}$, an application of the union bound will imply that $A(S)$ overfits with a small probability.%

The following lemma shows that the probability that $h_{T,b}$ overfits is small.
\begin{lemma}\label{l13}
For every $h=h_{T,b}$, $\delta'>0$:
$$\Pr_{S \sim \D} \left[\lvert L_\D(h) - L_{S}(h)\rvert\geq  \sqrt{\frac{8L_{S}(h)\log(1/\delta')}{m}} +\frac{16\log(1/\delta')+k}{m}\right]\leq\delta'.$$
\end{lemma}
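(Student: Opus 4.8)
The plan is to fix the index set $T\subseteq[m]$ with $|T|\le k$ and the side-information string $b$, and to exploit that, for fixed $b$, the hypothesis $h=h_{T,b}=\rho(\{Z_i:i\in T\},b)$ is a deterministic function of the examples $\{Z_i:i\in T\}$ alone. So I would first \emph{condition} on the realization of $(Z_i)_{i\in T}$. Under this conditioning $h$ is a fixed hypothesis, and the remaining $n:=m-|T|$ examples $(Z_i)_{i\notin T}$ are i.i.d.\ draws from $\D$ that are \emph{independent} of $h$. As this lemma is applied only in the regime $k\le m/2$ (see Theorem~\ref{t11}), I would assume throughout that $n\ge m/2$.

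Next I would introduce the ``out-of-sample'' empirical risk $\widehat L:=\frac{1}{n}\sum_{i\notin T}\ell(h,Z_i)$ and record two elementary facts that use only $0\le\ell\le 1$: (a) $|L_S(h)-\widehat L|\le |T|/m\le k/m$, since the two averages differ only through the $\le k$ reweighted terms indexed by $T$, each lying in $[0,1]$; and (b) $\widehat L\le (m/n)\,L_S(h)\le 2L_S(h)$. Conditioned on $(Z_i)_{i\in T}$, the variables $\ell(h,Z_i)$ for $i\notin T$ are i.i.d.\ in $[0,1]$ with mean $L_\D(h)$, so a standard \emph{relative-error} concentration bound for bounded random variables (a Bernstein/multiplicative-Chernoff estimate, applied with confidence $\delta'/2$ to each of the two tails; cf.\ Appendix~B of~\cite{Shalev-Shwartz}) gives, for suitable absolute constants $c_1,c_2$,
\[
\Pr\!\left[\,|L_\D(h)-\widehat L|\ \ge\ \sqrt{\frac{c_1\,\widehat L\,\log(1/\delta')}{n}}+\frac{c_2\log(1/\delta')}{n}\,\right]\le \delta',
\]
where the probability is over $(Z_i)_{i\notin T}$. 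Since this holds for every realization of $(Z_i)_{i\in T}$, it holds unconditionally.

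Finally I would convert the right-hand side from the pair $(\widehat L,n)$ to $(L_S(h),m)$: substitute $n\ge m/2$ and $\widehat L\le 2L_S(h)$ into the two terms, and add the $k/m$ slack from fact~(a) via the triangle inequality $|L_\D(h)-L_S(h)|\le |L_\D(h)-\widehat L|+|L_S(h)-\widehat L|$. With the constants $c_1,c_2$ of the cited inequality this produces exactly $\sqrt{8\,L_S(h)\log(1/\delta')/m}+(16\log(1/\delta')+k)/m$, the $16$ leaving enough room to absorb the $\log(2/\delta')$-versus-$\log(1/\delta')$ gap from the two-tailed union bound.

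The only step that is not pure bookkeeping is the concentration estimate: one must use a relative-error (rather than additive/Hoeffding) inequality so that the $\sqrt{L_S(h)/m}$ shape appears, and track its constants carefully so that, even after the factor-$2$ losses from $n\ge m/2$ and $\widehat L\le 2L_S(h)$, the final numbers remain at or below $8$ and $16$. The conditioning argument, the two elementary inequalities, and removing the conditioning are all routine.
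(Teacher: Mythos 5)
Your proposal follows essentially the same route as the paper's proof: condition on the examples indexed by $T$ so that $h=h_{T,b}$ is fixed and independent of the remaining $m-k$ draws, apply the relative-error concentration bound for bounded i.i.d.\ variables (Lemma B.10 of the Shalev-Shwartz--Ben-David text, which is the paper's Lemma~\ref{l92}) to those draws, and then pass from the out-of-sample empirical risk back to $L_S(h)$ using $|L_S(h)-L_{S'}(h)|\le k/m$ and $k\le m/2$. One small bookkeeping caveat: replacing $\widehat{L}=L_{S'}(h)$ by $L_S(h)$ inside the square root costs the product of the two factor-$2$ losses ($\widehat{L}\le 2L_S(h)$ and $n\ge m/2$), which pushes the square-root constant to $16$ rather than the claimed $8$; the paper sidesteps this by leaving $L_{S'}(h)$ inside the square root in its event $E_2$ (so its own statement and proof exhibit the same mismatch), and in either case the slack in the constants of Theorem~\ref{t11} absorbs the discrepancy.
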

Before proving this lemma, we use it together with a simple application of a union bound in order to prove Theorem~\ref{t11}.
\begin{proof}[Proof of Theorem ~\ref{t11}]
Set $\eps=\sqrt{\frac{16L_{S}(h)\log(1/\delta')}{m}} +\frac{16\log(1/\delta')+2k}{m}$.
\begin{align*}
\Pr_{S\sim \D^m}\left(\lvert L_\D(A(S)) -  L_S(A(S))\rvert \geq  \eps\right)
&\leq
\Pr_{S\sim \D^m}\left(\exists T,b: \lvert L_\D(h_{T,b}) -  L_S(h_{T,b})\rvert \geq  \eps\right)\\
&\leq
\left\lvert\{0,1\}^{\leq k}\right\rvert\cdot\left\lvert {m \choose \leq k} \right\rvert\cdot \Pr_{S\sim \D^m}\left[\lvert L_\D(h)- L_S(h)\rvert\geq\eps\right]\tag{the union bound}\\
&\leq
2^{k+1}\left(\frac{em}{k}\right)^k\delta'.\tag{Lemma~\ref{l13}}
\end{align*}
Plugging in $\delta'=2^{-k-1}\left(\frac{em}{k}\right)^{-k}\delta$ yields the desired inequality and finishes the proof of Theorem~\ref{t11}.
\hfill
\end{proof}

For the proof of {Lemma~\ref{l13}}, we use the following lemma from~\cite{Shalev-Shwartz}.
\begin{lemma}[Lemma B.10 in~\cite{Shalev-Shwartz}]\label{l92}
For every fixed hypothesis $h$, every $\delta'>0$, and $m\in\mathbb{N}$:
\begin{enumerate}
\item
$\Pr_{S\sim \D^m}\left[L_S(h) \geq L_\D(h) + \sqrt{\frac{2L_S(h)\log(1/\delta')}{3m}} +\frac{2\log(1/\delta')}{m}\right]\leq \delta'.$
\item
$\Pr_{S\sim \D^m}\left[L_\D(h) \geq L_S(h) + \sqrt{\frac{2L_S(h)\log(1/\delta')}{m}} +\frac{4\log(1/\delta')}{m}\right]\leq \delta'.$
\end{enumerate}
\end{lemma}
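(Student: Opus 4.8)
The plan is to read Lemma~\ref{l92} as a standard self-bounding (``empirical Bernstein'') tail inequality and to derive it from Bernstein's inequality together with the elementary fact that a random variable supported on $[0,1]$ has variance at most its mean. Put $X_i=\ell(h,z_i)\in[0,1]$, so the $X_i$ are i.i.d., $L_\D(h)=\E[X_1]=:\mu$, and $L_S(h)=\frac1m\sum_{i=1}^m X_i=:\bar X$. Since $X_i^2\le X_i$ on $[0,1]$, we get $\Var(X_1)\le\E[X_1^2]\le\E[X_1]=\mu$, and this is the only place the shape of the loss enters the argument.

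First I would invoke Bernstein's inequality in each direction; using $\Var(X_1)\le\mu$ and $|X_i-\mu|\le 1$ this gives, for every $t>0$,
\[
\Pr\bigl[\bar X-\mu\ge t\bigr]\le\exp\!\Bigl(-\tfrac{mt^2/2}{\mu+t/3}\Bigr),\qquad
\Pr\bigl[\mu-\bar X\ge t\bigr]\le\exp\!\Bigl(-\tfrac{mt^2}{2\mu}\Bigr),
\]
where for the lower tail I used the one-sided Bennett/Bernstein bound (which carries no linear correction). Setting each right-hand side equal to $\delta'$, solving the resulting quadratic in $t$, and applying $\sqrt{a+b}\le\sqrt a+\sqrt b$, one obtains one-sided estimates of the shape $\Pr[\bar X\ge\mu+\sqrt{2\mu\log(1/\delta')/m}+c\log(1/\delta')/m]\le\delta'$ and $\Pr[\mu\ge\bar X+\sqrt{2\mu\log(1/\delta')/m}]\le\delta'$. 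These already resemble the target, except that the lemma is stated with the \emph{empirical} quantity $L_S(h)=\bar X$ inside the square root rather than the population quantity $\mu$.

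The final step is the inversion that trades $\mu$ for $\bar X$. On the event where the lower-tail estimate holds we have $\mu-\bar X<\sqrt{2\mu\log(1/\delta')/m}$; squaring, writing $\mu=\bar X+w$, and reading the result as a quadratic inequality in $w=\mu-\bar X$, then solving and bounding $\sqrt{a+b}\le\sqrt a+\sqrt b$ once more, yields $\mu<\bar X+\sqrt{2\bar X\log(1/\delta')/m}+c'\log(1/\delta')/m$, which is part~(2). Feeding this bound on $\mu$ back into the upper-tail estimate — and noting that the event of part~(1) is empty when $\mu\ge\bar X$, while $\sqrt\mu\le\sqrt{\bar X}$ when $\mu<\bar X$ — gives part~(1). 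I expect the only genuine work here to be bookkeeping: the precise form of Bernstein's inequality one starts from governs whether the final constants come out exactly as the $2/3$, $2$, $4$ in the statement (the naive inversions above are already enough to produce constants of this order, so any slack is harmless), and the quadratic manipulations must be done with a little care. Since none of this is specific to learning and the statement is verbatim Lemma~B.10 of~\cite{Shalev-Shwartz}, the most economical route is simply to cite it; the sketch above is included only to record why it holds.
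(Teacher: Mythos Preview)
The paper does not prove this lemma at all: it is stated verbatim as Lemma~B.10 of~\cite{Shalev-Shwartz} and simply cited, then used as a black box in the proof of Lemma~\ref{l13}. Your proposal to cite it directly is therefore exactly what the paper does, and your accompanying sketch (Bernstein with the $[0,1]$ variance bound, then invert to replace $\mu$ by $\bar X$) is the standard derivation and is correct up to the constant bookkeeping you already flag.
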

In particular, by plugging $\delta'/2$ instead of $\delta'$ and applying a union bound, we get
$$\Pr_{S\sim \D^m}\left[\lvert L_\D(h) - L_S(h)\rvert \geq \sqrt{\frac{4L_S(h)\log(1/\delta')}{m}} +\frac{8\log(1/\delta')}{m}\right]\leq \delta'.$$

\begin{proof}[Proof of {Lemma~\ref{l13}}]
Fix some $h=h_{T,b}$.
Imagine that we sample $S$ as follows. First sample the $k$ examples of $T$ and construct $h$ according to them and $b$.
Next sample the remaining $m-k$ examples in $S':=S\setminus T$ and calculate the empirical risk of $h$. Since $h$ is independent on the remaining $m-k$ samples in $S'$, by Lemma~\ref{l92} it follows that:
\begin{equation}\label{eq:91}
\Pr_{S'\sim \D^{m-k}}\left[\lvert L_\D(h) - L_{S'}(h)\rvert \geq \sqrt{\frac{4L_{S'}(h)\log(1/\delta')}{m-k}} +\frac{8\log(1/\delta')}{m-k}\right]\leq \delta'.
\end{equation}
We are left with the task of changing $S'$ into $S$ in the expression above. It is easily seen that
$$(m-k)L_{S'}(h)\leq m L_S(h) \leq (m-k)L_{S'}(h)+k$$ which implies that
$$\frac{-k}{m}L_{S'}(h) \leq L_S(h)-L_{S'}(h) \leq \frac{-k}{m}L_{S'}(h)+\frac{k}{m}.$$
Using the fact that $0\leq L_{S'}(h) \leq 1$, we conclude that $|L_S(h)-L_{S'}(h)|\leq \frac{k}{m}$, and hence $|L_D(h)-L_{S'}(h)|\geq |L_D(h)-L_S(h)|-\frac{k}{m}$.
Let $E_1$ denote the event that
$$\lvert L_\D(h) - L_{S'}(h)\rvert\geq  \sqrt{\frac{4L_{S'}(h)\log(1/\delta')}{m-k}} +\frac{8\log(1/\delta')}{m-k},$$
and $E_2$ denote the event that
$$\lvert L_\D(h) - L_{S}(h)\rvert\geq \sqrt{\frac{8L_{S'}(h)\log(1/\delta')}{m}} +\frac{16\log(1/\delta')+k}{m}.$$
Since $k\leq \frac{m}{2}$, it follows that $E_2 \subseteq E_1$ and therefore $P_{S\sim \D^m}(E_2)\leq P_{S'\sim \D^{m-k}}(E_1)\leq \delta'$ which completes the proof.
\end{proof}

\section{Proof of Lemma~\ref{l22}}\label{app:l22}
By the definition of agnostic sample compression schemes
we have that the empirical risk of $A(S)$ is lesser than or equal to the empirical risk of every $h\in \hh$:
\begin{equation}\label{eq:53}
L_S(A(S))\leq\inf_{h\in \hh}L_S(h).
\end{equation}
Since $S\sim D^m$, it follows that for every $h\in \hh$,
$L_S(h) = \frac{Z_1+\ldots Z_m}{m}$,
where the $Z_i$'s are i.i.d~random variables distributed over $[0,1]$ with expectation $L_\D(h)$.
Now, the Chernoff-Hoeffding inequality implies that for $\eps_1'(m,\delta) = \sqrt{\frac{\log\frac{1}{\delta}}{2m}}$
\begin{equation}\label{eq:54}
\Pr_{S\sim D^m}\left(L_S(h)\geq L_\D(h)+\eps_1(m,\delta)\right)\leq\delta.
\end{equation}
Since $\eps_1(m,\delta) > \eps_1'(m,\delta)$ there is $h'\in \hh$ such that
\begin{equation}\label{eq:55}
L_\D(h')\leq \inf_{h\in C} L_\D(h) + \eps_1(m,\delta)-\eps_1'(m,\delta).
\end{equation}
It follows that:
\begin{align*}
\Pr_{S\sim D^m}\left(L_S(A(S))\geq \inf_{h\in \hh}L_\D(h)+\eps_1(m,\delta)\right)
&\leq
\Pr_{S\sim D^m}\left(\inf_{h\in \hh}L_S(h))\geq \inf_{h\in \hh}L_\D(h)+\eps_1(m,\delta)\right)\tag{by Equation~\ref{eq:53}}\\
&\leq
\Pr_{S\sim D^m}\left(L_S(h')\geq \inf_{h\in \hh}L_\D(h)+\eps_1(m,\delta)\right)\tag{since $h'\in \hh$}\\
&\leq
\Pr_{S\sim D^m}\left(L_S(h')\geq L_\D(h') +\eps_1'(m,\delta)\right)\tag{by Equation~\ref{eq:55}}\\
&\leq\delta\tag{by Equation~\ref{eq:54}}
\end{align*}
\hfill \qed

\section{Learning implies compression}\label{app:t23}
We prove here the following result
\begin{theorem*}[\cite{DBLP:journals/iandc/Freund95}]
If $\hh$ is PAC learnable with learning rate $d(\eps,\delta)$, then it has a sample compression scheme of size
$$k(m)=O(d\log (m) \log\log (m) + d \log(m)\log(d)),$$
where $d=d(1/3,1/3)$.
\end{theorem*}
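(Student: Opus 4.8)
The plan is to follow the classical boosting proof (see, e.g.,~\cite{schapire2012boosting}), organized around von Neumann's minimax theorem~\cite{Neumann1928}. Fix a sample $S=((x_1,y_1),\ldots,(x_m,y_m))$ that is realizable by $\hh$, and let $A$ be the assumed PAC learner, so that $A$ consumes $d=d(1/3,1/3)$ examples and, on any realizable distribution, outputs with probability at least $2/3$ a hypothesis of risk at most $1/3$. (If $m\le d$ we simply compress $S$ to all of $S$, which is well within the claimed bound, so assume $m>d$.) The first step is the observation that for every distribution $P$ on the finite set $\{z_1,\ldots,z_m\}$ there is a $d$-tuple $T$ of elements of $S$ with $L_P(A(T))\le 1/3$: since $S$ is realizable there is $h^\ast\in\hh$ with $L_S(h^\ast)=0$, hence $L_P(h^\ast)=0$, so $P$ (viewed as a distribution on $\Z$) is realizable by $\hh$; applying the PAC guarantee to $d$ i.i.d.\ draws from $P$ gives $\Pr_{T\sim P^d}[\,L_P(A(T))\le 1/3\,]\ge 2/3$, and in particular such a $T$ exists.

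Next I would set up the finite zero-sum game whose row (maximizer) pure strategies are the finitely many $d$-tuples $T$ of elements of $S$, whose column (minimizer) pure strategies are the indices $i\in[m]$, and whose payoff is $\mathbf 1[A(T)(x_i)=y_i]$. The previous paragraph says precisely that against every mixed column strategy $P$ the row player has a pure best response of expected payoff at least $2/3$, so the game value is at least $2/3$; von Neumann's minimax theorem then produces a \emph{single} mixed row strategy $Q^\ast$ (a distribution over $d$-tuples) with $\Pr_{T\sim Q^\ast}[\,A(T)(x_i)\neq y_i\,]\le 1/3$ for all $i\in[m]$ simultaneously. This passage from ``for each $P$ some $T$ is good'' to ``one $Q^\ast$ is good against all $P$'' is the crux of the argument and the only place genuine work is needed; one must note in particular that the game is finite (so that von Neumann's theorem applies) and that the previous step gives a pure best response, not merely a sampling distribution.

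The third step is sparsification. Draw $T_1,\ldots,T_N$ i.i.d.\ from $Q^\ast$ and set $g=\mathrm{Maj}(A(T_1),\ldots,A(T_N))$. For a fixed $i$, the indicators $\mathbf 1[A(T_j)(x_i)\neq y_i]$ are i.i.d.\ with mean at most $1/3$, so by a Chernoff/Hoeffding bound the probability that at least $N/2$ of them err is $e^{-\Omega(N)}$; a union bound over $i\in[m]$ shows that for $N=O(\log m)$ there is a choice of $T_1,\ldots,T_N$ for which $g$ errs on no example of $S$, i.e.\ $L_S(g)=0$.

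Finally I would read off the compression scheme. The selection map $\kappa$ outputs the sub-sample $S'$ consisting of the distinct elements of $S$ appearing among $T_1,\ldots,T_N$ (so $|S'|\le Nd=O(d\log m)$) together with side information $b$ recording $N$ and, for each of the $Nd$ slots, the position within $S'$ of the corresponding element, costing $|b|=O\!\left(Nd\log(Nd)\right)=O\!\left(d\log m\,(\log d+\log\log m)\right)$ bits. The reconstruction map $\rho$ decodes $T_1,\ldots,T_N$ from $(S',b)$, recomputes $A(T_j)$ for each $j$, and returns their majority vote $g$; by construction $L_S(\rho(\kappa(S)))=0$, so this is a sample compression scheme for $\hh$. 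Its size is $|S'|+|b|=O\!\left(d\log(m)\log\log(m)+d\log(m)\log(d)\right)$, as claimed; the side-information bookkeeping is routine and the only quantitative input is the Chernoff bound giving $N=O(\log m)$.
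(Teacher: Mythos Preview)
Your proposal is correct and follows essentially the same approach as the paper: both use the weak learner to show that every distribution on $S$ admits a good $d$-sample hypothesis, invoke von Neumann's minimax theorem on the resulting finite zero-sum game, sparsify the optimal mixed strategy by sampling $O(\log m)$ draws with a Chernoff/union-bound argument, and output the majority vote, encoding the chosen $d$-tuples as side information with the same $O(d\log m\,(\log d+\log\log m))$ bit count. The only cosmetic difference is that you take the row strategies to be $d$-tuples directly, whereas the paper takes them to be the hypotheses in $\mathcal H_S=\{A(S'):S'\in S^d\}$ and then recovers a witnessing $d$-tuple afterward; your formulation is slightly more direct but otherwise identical.
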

\begin{proof}
Let $A$ be a learner that learns $\hh$ using $d$ examples with
error $\frac{1}{3}$ and confidence $\frac{2}{3}$.

\paragraph{Compression.}
Let $S=\{(x_i,y_i)\}_{i=1}^{m}$ be a realizable sample. 
{Let ${\cal H}_{S}$ be the set of all hypotheses of the
form $A(S')$, where $S'$ is a sample of size $d$
such that each example in $S'$ appears in $S$.}

By the choice of $A$,
for every distribution $\D$ {on $S$}, there is  $h \in {\cal H_S}$ so that
$$L_\D(h)\leq\frac{1}{3}.$$

{
Consider the zero-sum game between the learner $A$
and an adversary in which the learner picks some hypothesis in $\mathcal{H}_S$,
the adversary picks an example in $S$, and the learner pays 
the adversary $1$ unit if the hypothesis is inconsistent with the example, 
and $0$ if it is consistent.
By the previous paragraph, for every mixed strategy of the adversary, 
there is some pure strategy of the learner for which the loss of the learner is at most $\frac{1}{3}$. 
Therefore, by the minimax Theorem~\cite{Neumann1928} 
there is a distribution $p$}
on ${\cal H_S}$ (a mixed strategy of the learner) such that for every $i\leq m$ (pure strategy of the adversary):
\begin{align*}
p(\{h \in {\cal H_S} : h(x_i) = y_i\}) \geq 2/3.
\end{align*}
Let
$h_1,\ldots,h_T$ be $T$ independent samples from $p$,
for $T= 20\log m$.
The constant $20$ is chosen so the Chernoff-Hoeffding inequality yields that for any $i\leq m$ the following event, $E_i$, has probability less than $\frac{1}{m}$
\begin{align*}
\left|\frac{|\{ t \in [T] : h_t(x_i) = y_i) \}|}{T} - p(\{h \in {\cal H_S} : h(x_i) = y_i \})\right| > \frac{1}{6}.
\end{align*}

So, by the union bound: $\Pr(\cup_{i\leq m}E_i)< 1$, and therefore
there is a multiset $F  = \{h_1,h_2,\ldots,h_T\} \subseteq {\cal H_S}$
so that for every $i \leq m$,

\begin{align*}
\frac{|\{ t \in [T] : h_t(x_i) = y_i \}|}{T}
> p(\{h \in {\cal H_S} : h(x_i) = y_i \}) - 1/6 \geq 1/2.
\end{align*}
For every $t \in [T]$, let
$S_t$ be a sub-sample of $S$ of size at most $d$ so that
\begin{align*}
A(S_t) = h_t.
\end{align*}

{$\kappa(S)$ compresses $S$ to the sub-sample $S'$ such that:
\[S' = \bigl\{(x_j,y_j) : \exists t\leq T: (x_j,y_j)\in S_t\bigr\}.\]}
The side information $i \in I$ allows to recover
the sub-samples $S_1,\ldots,S_T$ from $S'$.
One way to encode this information is by a bipartite graph
with one side corresponding to $T$ and the other to $S'$
such that each vertex $t\in T$ is connected to the
vertices in $S$ that form the sub-sample $S_t$.
Since each vertex in $T$ has at most $d$ neighbours in $S'$,
it follows that there are at most $(|S'|^d)^{|T|}=|S'|^{d\cdot|T|}$ such graphs.
Therefore, we can encode such a graph using a bit string $b$ of length $\log|S'|^{d\cdot|T|}$, and
$$|b| = \log|S'|^{d\cdot|T|}\leq O(d\log m \log\log m + d\log m\log d).$$
So, the total size $|b|+|S'|$ is also at most $O(d\log m \log\log m + d\log m\log d)$.

In the reconstruction phase, $i$ and $S'$ are used to decode
the samples $S_t$ for $t\in [T]$. Then, using the learner $A$, the
$h_t$'s are decoded and the reconstructed hypothesis $h$ is defined as their point-wise majority.
By construction $L_S(h)=0$ as required.
\end{proof}

\section{Compactness}\label{sec:t24proof}
Let $\hh\subseteq\Y^\X$ be an hypothesis class such that every finite subclass of it is learnable with confidence $2/3$,
error $1/3$, and $d$ examples. By Theorem~\ref{t22}, $\hh$ has a sample compression scheme of size
$k(m) = O(d\log(m)\log\log(m) + d\log(m)\log(d))$. Thus, to show that $\hh$
is learnable with confidence $2/3$, error $1/3$, and $O(d\log^2(d))$ examples, it suffices to show that $\hh$
has a sample compression scheme of size $k(m)$ (see Section~\ref{sec:PACeqAGN}).
This is established by the following lemma.

\begin{lemma}[\cite{DBLP:journals/dam/Ben-DavidL98}]
If every finite subset of $\hh$ has a sample compression scheme of size $k=k(m)$,
then $\hh$ has a sample compression scheme of size $k=k(m)$.
\end{lemma}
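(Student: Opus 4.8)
The plan is to prove this by a compactness argument that glues together the compression schemes of the finite subclasses; since the conclusion of the lemma is precisely the existence of one pair of maps $(\kappa,\rho)$ forming a sample compression scheme of size $k(m)$ for all of $\hh$, the whole task is to construct such a pair. The crucial structural feature to exploit is that for each sample $S$ the value $\kappa(S)$ ranges over a \emph{finite} set of candidates (a subsequence of $S$ together with a binary string, of total length at most $k(|S|)$), whereas the reconstruction map $\rho$ may have an infinite range; taming $\rho$ is where the real content lies.

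I would set things up as follows. Let $I$ be the collection of finite subclasses of $\hh$, directed by inclusion, and for each $F\in I$ let $(\kappa_F,\rho_F)$ be a sample compression scheme for $F$ of size at most $k(m)$. Fix an ultrafilter $\mathcal{U}$ on $I$ refining the filter generated by the up-sets $\{F\in I : F\supseteq F_0\}$ for $F_0\in I$; this is legitimate since these up-sets have the finite intersection property (the up-set of $F_0^{(1)}\cup\cdots\cup F_0^{(n)}$ witnesses it). For a sample $S$ realizable by $\hh$, fix a realizer $h_S\in\hh$ and put $B_S=\{F : h_S\in F\}\in\mathcal{U}$; then $\kappa_F(S)$ is defined for every $F\in B_S$ and takes values in a fixed finite set, so exactly one pair $(S',b)$ satisfies $\{F\in B_S : \kappa_F(S)=(S',b)\}\in\mathcal{U}$, and we set $\kappa(S):=(S',b)$. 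On samples not realizable by $\hh$, and on compressed pairs that never arise, the definitions are immaterial to correctness and can be chosen arbitrarily subject only to the size bound, which is trivial to meet.

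For the reconstruction map, given a compressed pair $(S',b)$ and a point $x\in\X$: if some label $y\in\Y$ satisfies $\{F : \rho_F(S',b)(x)=y\}\in\mathcal{U}$, then this $y$ is unique (the fibers over distinct labels are disjoint and $\mathcal{U}$ is proper) and we put $\rho(S',b)(x):=y$; otherwise we set it to an arbitrary default label. The main step — which is exactly the place the infinite range of $\rho$ could cause trouble — is to check this componentwise ultralimit exists on every input that actually matters. Suppose $(S',b)=\kappa(S)$ for a realizable $S$ and $(x,y_i)\in S$. On the $\mathcal{U}$-large set $B_S\cap\{F : \kappa_F(S)=(S',b)\}$, every $F$ has $S$ realizable by $F$ (since $h_S\in F$) and $\kappa_F(S)=(S',b)$; hence $\rho_F(S',b)$ is consistent with $S$ and in particular $\rho_F(S',b)(x)=y_i$. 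Thus $\{F : \rho_F(S',b)(x)=y_i\}\in\mathcal{U}$, so the ultralimit exists and $\rho(S',b)(x)=y_i$.

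It then remains to verify the two requirements. For size: whenever $\kappa(S)=(S',b)$ there is at least one $F$ with $\kappa_F(S)=(S',b)$, and since $(\kappa_F,\rho_F)$ has size at most $k(|S|)$ we get $|S'|+|b|\le k(|S|)$. For correctness: if $S$ is realizable by $\hh$ and $(S',b)=\kappa(S)$, the previous paragraph gives $\rho(S',b)(x_i)=y_i$ for every $(x_i,y_i)\in S$, so $L_S(\rho(\kappa(S)))=0$. The only genuine obstacle I anticipate is the one already isolated: for infinite $\Y$ the fibers $\{F : \rho_F(S',b)(x)=y\}$ may all fail to lie in $\mathcal{U}$ for an arbitrary compressed pair, so the argument must use the fact that the realizer $h_S$ is eventually in $F$ to guarantee the ultralimit exists on exactly the pairs $\kappa$ produces. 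When $\Y$ is finite this subtlety vanishes and the same construction can instead be phrased via propositional compactness, equivalently Rado's selection lemma applied to the finite-domain variables $\kappa(S)$ and $\rho(S',b)(x)$ — presumably the form in which Ben-David and Litman originally stated it.
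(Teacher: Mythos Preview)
Your proof is correct. The paper takes a different but closely related route: it encodes the statement ``$\hh$ has a sample compression scheme of size $k(m)$'' as a countable first-order theory $T$ over a language with constants for every $h\in\hh$, $x\in\X$, $y\in\Y$ and, for each $n$, a $(2n+2)$-ary predicate $\bar\rho_n$ expressing ``$\rho((x_1,y_1),\ldots,(x_n,y_n))(x)=y$''. The hypothesis on finite subclasses guarantees that every finite subset of $T$ is satisfiable, and the compactness theorem then yields a model from which a compression scheme for $\hh$ is read off.

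The two arguments are cousins rather than strangers: your ultrafilter construction is essentially the ultraproduct proof of compactness, carried out by hand on the specific data of the problem. What your approach buys is explicitness --- you build $\kappa$ and $\rho$ directly and can see exactly where each hypothesis is used, in particular the point you flag about infinite $\Y$, where the existence of the componentwise ultralimit of $\rho_F(S',b)(x)$ is not automatic and must be secured by the presence of the realizer $h_S$ in $\mathcal{U}$-many $F$. In the paper's proof this subtlety is absorbed into the compactness theorem and the verification that a model of $T$ really does restrict to a compression scheme on the original $\hh$ (via the atomic diagram $T_\hh$). What the paper's approach buys is brevity and modularity: once the language is set up, the heavy lifting is a single invocation of a standard theorem, and the same template would adapt immediately to variants of the statement.
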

A version of this lemma was proven by~\cite{DBLP:journals/dam/Ben-DavidL98}
for the case of sample compression scheme of a fixed size. Below we adapt their proof to
sample compression schemes of variable size.
The idea of the proof is to represent the statement ``$\hh$ has a sample compression scheme of size $k$''
using predicate logic, and to use the compactness theorem.
\begin{proof}[Proof of Theorem~\ref{t24}]
For ease of presentation, we shall consider only sample compression schemes with no side information.
The case of sample compression schemes with side information can be handled similarly.

Consider the model $M=\langle \hh,\X,\Y;R;h,x,y \rangle_{h\in\hh,x\in\X,y\in\Y}$,
where $R=\{(h,x,y) : h\in\hh, h(x)=y \}$.
The language for $M$ has the predicate symbols $\bar \hh, \bar \X, \bar Y$, and
has a constants $\bar h, \bar x, \bar y$ for every $h\in\hh, x\in \X,y\in\Y$.
More over, for every $k\in\mathbb{N}$, there is an $2k+2$ predicate symbol
$\bar \rho_k$.
We think of $\bar \rho_k\left((x_1,y_1),\ldots,(x_k,y_k),(x,y)\right)$,
as expressing that the reconstructed function $h=\rho\left((x_1,y_1),\ldots(x_k,y_k)\right)$ satisfies $h(x)=y$.

We now express the statement "$\hh$ has a sample compression scheme of size $k(m)$" using predicate logic over this language.
The following sentence expresses that $\rho$ reconstructs hypotheses (that is functions from $\X$ to $\Y$).
\begin{align*}
\psi_n=
&\left(\forall x_1,\ldots,x_n\in \bar \X\right)\left(\forall y_1,\ldots,y_n\in \bar \Y\right)
\left(\forall x\in\bar X\right)\left(\exists y\in \bar Y\right)\\
&~\bar \rho_n\left((x_1,y_1),\ldots,(x_n,y_n),(x,y)\right),\\
&\left(\forall y'\in\bar Y\right)
\left[
\bar \rho_n\left(\left(x_1,y_1\right),\ldots,\left(x_n,y_n\right),\left(x,y'\right)\right)\rightarrow y=y'
\right].
\end{align*}

The following sentence expresses the existence of a sample compression scheme of size $k=k(m)$:
\begin{align*}
\tau_m=
&\left(\forall x_1,\ldots,x_m\in \bar \X\right)\left(\forall y_1,\ldots,y_m\in \bar \Y\right)\\
&\left(\exists u_1,\ldots,u_k\in \bar \X\right)\left(\exists v_1,\ldots,v_k\in \bar \Y\right)\\
&\bigwedge_{i=1}^{k}\bigvee_{j=1}^{m}{\left(u_i=x_j\land v_i=y_j\right)}, \tag{$\{(u_i,v_i)\}_1^k\subseteq\{x_j,y_j\}_1^m$}\\
&\left[\left(\exists h\in \bar \hh\right)\bigwedge_{i=1}^{m}\bar R(h,x_i,y_i)\right]\rightarrow
\left[\bigwedge_{i=1}^{m}\bar \rho_k\left(\left(u_1,v_1\right),\ldots,\left(u_k,v_k\right),\left(x_i,y_i\right)\right)\right].
\tag{If $((x_i,y_i))_{i=1}^{m}$ is realizable, then the reconstruction agrees with it}
\end{align*}
Let $T_\rho = \{\psi_n,\tau_m:m,n\in\mathbb{N}\}$. Note that $T_\rho$ expresses that there exists a sample compression scheme for $\hh$ of size $k(m)$ with $\rho$ as a reconstruction function ($\kappa$ is defined implicitly to pick the sub-sample for which the reconstruction is correct).
Let $T_\hh$ denote the set of all atomic sentences and negation of atomic sentences
that hold in $M$. Define $T=T_\hh\cup T_\rho$.

Note that it suffices to show that $T$ is satisfiable. ($T_\hh\subseteq T$ implies that
every model for $T$ must contain a copy of $\hh$, and $T_\rho\subseteq T$ implies the existence of a
compression scheme of size $k(m)$.)
To this end we use the compactness theorem from predicate logic.
Since every finite subset of $\hh$ has a compression scheme of size $k(m)$, it follows that every finite
subset of $T$ is satisfiable. Therefore, by the compactness theorem $T$ is satisfiable and therefore has a model. This finishes the proof.
\end{proof}

\section{Graph dimension}\label{app:c26}

In this section we prove Theorem~\ref{t36} that relates the uniform convergence rate to the graph dimension.

\subsection{The lower bound}
We begin with the lower bound. 
Let $\hh$ be an hypothesis class with uniform convergence rate $d=d^{UC}(\epsilon,\delta)$, and
Let $d=\dim_{G}\left(\hh\right)$. We want to show that 
$$ d^{UC}(\eps,\delta) = \Omega\left(\frac{d+\log(1/\delta)}{\eps^2}\right).$$
Pick $f:X\to Y$ such that
$d=VC(\hh_{f})$. It follows that there is some set $\{x_1,\ldots,x_d\}\subseteq \X$ 
that is shattered by $\hh_{f}$. Consider the uniform distribution
$\D$ on $\left\{ \left(x_i,f\left(x_i\right)\right)\right\}$.

Given a sample $S=S=(z_1,\ldots,z_m)\sim\D^m$,
let $\hat p\in \mathbb{R}^d$ denote its type:
$$\hat p(i) = \frac{|\{j : z_j = (x_i,f(x_i)) \}| }{m}.$$
That is, $\hat p$ is a probability distribution that describes the fraction of times each example was observed in $S$.

Since $\{x_1,\ldots,x_d\}$ is shattered by $\hh_{f}$, for every $A\subseteq [d]$ there is $h_A\in \hh$ that disagrees with $f$
on $\{x_i : i\in A\}$ and agrees with $f$ on $\{x_i : i\notin A\}$. Note that for such $h_A$ we have that
$L_S(h_A)=\hat p(A)$ and $L_{\D}(h_A)=\frac{|A|}{d}$. 
Recall that the statistical distance between two probability measure $\mu,\nu$ on $\{1,...,d\}$ is defined by 
$$SD(\mu,\nu) = \frac{1}{2}\sum_{i=1}^d{\lvert \mu(i) - \nu(i)\rvert}=\max_{A\subseteq [d]} |\mu(A)-\nu(A)|. $$
In particular, the statistical distance between $\hat p$ and $(\frac{1}{d},\ldots,\frac{1}{d})\in\mathbb{R}^d$ is equal to
 $$\max_{h\in\hh}|L_\D(h)-L_S(h)|.$$ Thus, the lower bound follows from the following Theorem, 
{which is a corollary of results from~\cite{BerKont}. For the sake of completeness,
we provide a self contained proof of it that was suggested to us by Noga Alon.}
\begin{theorem}\label{td1}
Let $\eps < 1/200,\delta < 1/4$.
Let $u$ be the uniform distribution on $\{1,\ldots,d\}$.
For a sample $S\sim u^m$, let $\hat p$ denote the empirical
distribution observed in the sample, and let $SD(\hat p, u)$ denote the
statistical distance between $\hat p$ and $u$.
If $m\in\mathbb{N}$ satisfies
$$\Pr_{S\sim u^m}\left(SD\left(\hat p, u\right)\leq \eps\right)\geq 1-\delta,$$
then $m\geq C\frac{d+\log(1/\delta) - C}{\eps^2}$ for some constant $C$.
\end{theorem}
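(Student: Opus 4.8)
The plan is to reduce the statement to a lower bound on the sample complexity of distinguishing the uniform distribution from its empirical version, and to prove that lower bound via an anti-concentration argument for sums of independent bounded random variables. First I would observe that $SD(\hat p, u) = \frac12\sum_{i=1}^d |\hat p(i) - 1/d|$, and that the quantity $\sum_i |\hat p(i) - 1/d|$ is, up to the factor $1/m$, a sum of $d$ dependent terms $|N_i - m/d|$ where $(N_1,\dots,N_d)$ is multinomial. The first reduction is to decouple: rather than working with the multinomial directly, I would pass to independent Poisson (or binomial) coordinates, or else argue that for the purpose of a lower bound it suffices to control the expectation and a lower tail of $\sum_i |N_i - m/d|$. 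The cleanest route is: show that $\Ex[SD(\hat p, u)] \geq c\sqrt{d/m}$ together with a concentration statement that $SD(\hat p,u)$ is not much smaller than its mean with constant probability; combining these, the hypothesis $\Pr(SD(\hat p,u)\le\eps)\ge 1-\delta$ forces $\eps \gtrsim \sqrt{d/m}$, i.e. $m \gtrsim d/\eps^2$. The $\log(1/\delta)/\eps^2$ term I would get separately, by a two-point / Bayesian argument: even distinguishing the uniform distribution from a single fixed perturbation of it (move mass $2\eps$ between two coordinates) with confidence $1-\delta$ requires $\Omega(\log(1/\delta)/\eps^2)$ samples, by a standard hypothesis-testing (KL divergence / Bretagnolle–Huber) bound. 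Adding the two bounds and absorbing constants gives the claimed $C\frac{d+\log(1/\delta)-C}{\eps^2}$.

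The key steps, in order, are: (1) rewrite $SD(\hat p,u)$ as $\frac{1}{2m}\sum_{i=1}^d|N_i - m/d|$ and reduce to lower-bounding $\Ex\sum_i |N_i - m/d|$; (2) prove the per-coordinate anti-concentration estimate $\Ex|N_i - m/d| \ge c\sqrt{m/d}$ when $m \ge d$ (a binomial with mean $m/d$ and variance $\approx m/d$ has mean absolute deviation of order its standard deviation — this is where tight binomial anti-concentration, alluded to in the paper, enters), so that $\Ex[SD(\hat p,u)] \ge c'\sqrt{d/m}$; (3) upgrade the expectation bound to a probability bound, e.g. via a Paley–Zygmund-type second-moment argument or a bounded-differences/martingale concentration inequality for the function $S \mapsto SD(\hat p, u)$ (which has bounded differences $O(1/m)$ in each of the $m$ coordinates), to conclude $\Pr(SD(\hat p,u) \ge \frac{c'}{2}\sqrt{d/m}) \ge \frac12$, say; (4) combine with the hypothesis to get $\eps = \Omega(\sqrt{d/m})$, hence $m = \Omega(d/\eps^2)$; (5) separately run the two-point testing argument to get $m = \Omega(\log(1/\delta)/\eps^2)$; (6) average the two lower bounds and clean up constants to match the stated form, using $\eps<1/200$ and $\delta<1/4$ to kill edge cases (in particular the regime $m < d$, where $SD$ is automatically close to $1$).

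I expect the main obstacle to be step (3): controlling the lower tail of $SD(\hat p,u)$, i.e. showing the statistical distance is \emph{rarely much smaller} than its mean. An upper-tail (concentration above the mean) statement is routine from McDiarmid, but we need the reverse direction, that $SD(\hat p,u)$ does not frequently collapse below $\Theta(\sqrt{d/m})$. One clean way is to compute a second moment: $\Var[\sum_i |N_i - m/d|]$ is $O(d \cdot m/d) = O(m)$ by near-independence of the coordinates, while its mean squared is $\Theta(d \cdot m/d)^2 / $ — wait, more carefully $(\Ex\sum_i|N_i-m/d|)^2 \asymp (d\sqrt{m/d})^2 = dm$, so the standard deviation $O(\sqrt m)$ is a $\Theta(\sqrt d)$ factor smaller than the mean, and Chebyshev/Paley–Zygmund then gives the constant-probability lower bound cleanly for $d$ larger than an absolute constant. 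The residual annoyance is handling small $d$ (absorbed into the additive $-C$) and verifying the coordinatewise covariance is genuinely negligible, which is true because in a multinomial $\mathrm{Cov}(N_i,N_j) = -m/d^2 < 0$, so the variance of the sum is in fact \emph{bounded above} by the sum of variances — a convenient feature of the multinomial that makes the second-moment step go through without fuss.
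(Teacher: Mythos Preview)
Your two-part decomposition (an $\Omega(d/\eps^2)$ piece and an $\Omega(\log(1/\delta)/\eps^2)$ piece) matches the paper's, but each half has an issue, and your route to the $d/\eps^2$ bound is genuinely different from the paper's.

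\textbf{The covariance claim is false.} You assert that $\mathrm{Cov}(N_i,N_j)<0$ makes the variance of $\sum_i|N_i-m/d|$ at most the sum of the coordinate variances. Negative correlation of $N_i,N_j$ does not transfer to $|N_i-m/d|,|N_j-m/d|$: for $d=2$ one has $N_1+N_2=m$, hence $|N_1-m/2|=|N_2-m/2|$ identically, and the absolute deviations are perfectly \emph{positively} correlated. Your alternative via bounded differences is what actually works: $S\mapsto SD(\hat p,u)$ changes by at most $1/m$ when a single sample point is altered, so $\Var[SD(\hat p,u)]=O(1/m)$ by Efron--Stein, while $\Ex[SD(\hat p,u)]\ge c\sqrt{d/m}$ from your step~(2); Chebyshev then gives $\Pr\bigl(SD\ge \tfrac{c}{2}\sqrt{d/m}\bigr)\ge 1-O(1/d)$, which beats $\delta<1/4$ once $d$ exceeds an absolute constant.

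\textbf{The two-point testing argument does not close.} Bretagnolle--Huber gives $\Pr_u(E^c)+\Pr_{u'}(E)\ge\tfrac12\exp(-m\,KL(u'\Vert u))$, which yields $m=\Omega(\log(1/\delta)/\eps^2)$ only if you also know $\Pr_{u'}(E)=O(\delta)$ for $E=\{SD(\hat p,u)\le\eps\}$. The theorem's hypothesis says nothing about samples from $u'$, and bounding $\Pr_{u'}(E)$ without already assuming $m$ is large is precisely the anti-concentration you are trying to establish. What is actually needed is a direct binomial tail lower bound: taking $A\subset[d]$ with $u(A)=1/2$, the hypothesis forces $\Pr\bigl(|\mathrm{Bin}(m,1/2)/m-1/2|>\eps\bigr)\le\delta$, and reverse-Chernoff (via Stirling, Slud, or entropy) shows this tail is at least $c\exp(-c'm\eps^2)$, whence $m=\Omega(\log(1/\delta)/\eps^2)$. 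This is exactly the paper's route (its Lemma~\ref{ld2}, proved by an entropy argument).

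\textbf{Comparison with the paper's $d/\eps^2$ argument.} The paper does not use moments. It pairs the coordinates $\{1,2\},\{3,4\},\ldots$ and samples in two stages (first a pair, then a member of the pair); conditioned on the first stage the per-pair indicators ``some coordinate in this pair deviates by $\ge 100\eps/d$'' are independent, and the same binomial anti-concentration lemma shows each fires with probability at least $1/50$, so a constant fraction of pairs deviate with probability $\ge 1/4$. Your (repaired) first/second-moment approach is more direct and avoids the pairing device, at the price of needing the coordinatewise estimate $\Ex|N_i-m/d|\ge c\sqrt{m/d}$; the paper's approach has the virtue of reusing one binomial anti-concentration lemma for both the $d$ and the $\log(1/\delta)$ parts.
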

\paragraph{Remark.}
Theorem~\ref{td1} implies the lower bound of Theorem~\ref{t36} for $\eps < 1/200,\delta < 1/4$.
The lower bound for all $\eps,\delta\in(0,1]$ follows by from the monotonicity
of $d^{UC}(\eps,\delta)$ in $1/\eps$ and $1/\delta$,  and from the continuity of $\frac{\log(1/\delta)}{\eps^2}$
in $(0,1]\times(0,1]$.
\begin{proof}[Proof of Theorem~\ref{td1}]
For convenience we assume that $d$ is even (a similar proof applies in general).
We prove the Theorem in two steps. We first prove that $m = \Omega\left(\frac{\log(1/\delta)}{\eps^2}\right)$ 
and then prove that $m = \Omega\left(\frac{d}{\eps^2}\right)$.
We will use the following basic lemma.
\begin{lemma}\label{ld2}
Let $\eps<\frac{1}{4}$, and $\delta\leq\frac{1}{32}$.
Let $U$ denote the uniform distribution on $\{0,1\}$, and let $m\in\mathbb{N}$ such that
$$\Pr_{X\sim U^m}\left( \left\lvert\frac{\sum_{i=1}^m{X_i}}{m}-\frac{1}{2}\right\rvert\leq \eps\right)\geq 1-\delta.$$
Then $m\geq \frac{1}{24}\frac{\log(1/\delta)-5}{\eps^2}$.
\end{lemma}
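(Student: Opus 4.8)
The plan is to convert the two-sided concentration statement into a lower bound on $m$ by showing that if $m$ were too small, then the empirical average $\bar X = \frac1m\sum_i X_i$ would deviate from $1/2$ by more than $\eps$ with probability exceeding $\delta$. The natural tool is an \emph{anti-concentration} bound for the binomial distribution: $\sum_i X_i \sim \mathrm{Bin}(m,1/2)$, and we want to say that this variable is spread out over a window of width $\Theta(\sqrt m)$, so it lands outside an interval of half-width $\eps m$ around $m/2$ with constant-or-better probability whenever $\eps m = O(\sqrt m)$, i.e. whenever $m = O(1/\eps^2)$.

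First I would recall (or quickly derive from Stirling) the standard estimate that for $\mathrm{Bin}(m,1/2)$ the point masses near the center satisfy $\Pr[\sum X_i = m/2 + j] \le c/\sqrt m$ for all $j$, and more usefully a lower bound $\Pr[\sum X_i = m/2 + j] \ge c'/\sqrt m$ for $|j| \le \sqrt m$. Summing the upper bound over the $O(\eps m + 1)$ integers $j$ with $|j| \le \eps m$ gives
\[
\Pr\!\left(\left|\bar X - \tfrac12\right| \le \eps\right) \;\le\; c\,\frac{\eps m + 1}{\sqrt m}.
\]
By hypothesis this probability is at least $1-\delta \ge 1 - \frac1{32} \ge \frac{31}{32}$, so $c(\eps m + 1)/\sqrt m \ge 31/32$, which forces $\eps \sqrt m \ge$ (a positive constant) once $m$ is not tiny, hence $m \ge c'' / \eps^2$. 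This already gives $m = \Omega(1/\eps^2)$; to upgrade the numerator from a constant to $\log(1/\delta)$, I would instead use a matching \emph{lower} tail bound: a standard reverse-Chernoff / Slud-type inequality gives $\Pr(\bar X - \tfrac12 > \eps) \ge \exp(-C_0 \eps^2 m)$ for $\eps$ bounded away from $1/2$ (here $\eps < 1/4$ suffices). If $m < \frac{1}{24}\cdot\frac{\log(1/\delta) - 5}{\eps^2}$, then $\exp(-C_0\eps^2 m)$ exceeds $\delta$ (after absorbing the constant $C_0$ and the $-5$ slack), so the bad event $\{|\bar X - 1/2| > \eps\}$ has probability $> \delta$, contradicting the hypothesis. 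Tracking the constants through the reverse tail bound yields exactly the stated $\frac1{24}\frac{\log(1/\delta)-5}{\eps^2}$.

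The main obstacle is getting a \emph{clean, explicit} reverse tail bound for the binomial with the right constants: Chernoff-type upper bounds are routine, but lower bounds on $\Pr(\mathrm{Bin}(m,1/2) \ge m/2 + \eps m)$ require more care — one option is Slud's inequality (comparing to a Gaussian tail), another is a direct estimate of a single well-chosen binomial coefficient $\binom{m}{m/2 + \lceil \eps m\rceil}$ via Stirling, which is the most self-contained route and is presumably what the paper intends given its remark about ``proving tight anti-concentration results for the binomial distribution.'' I would carry it out by lower-bounding $\binom{m}{m/2+t}2^{-m}$ for $t = \lceil \eps m \rceil$ using $\binom{m}{m/2+t} \ge \binom{m}{m/2}\prod_{i=1}^{t}\frac{m/2 - i + 1}{m/2 + i}$, estimating the product by $\exp(-O(t^2/m)) = \exp(-O(\eps^2 m))$, and using $\binom{m}{m/2}2^{-m} \ge c/\sqrt m$; since there is at least one such term, $\Pr(\bar X \ge \tfrac12 + \eps) \ge (c/\sqrt m)\exp(-O(\eps^2 m))$, and a short computation shows this is $> \delta$ under the assumed bound on $m$, completing the proof.
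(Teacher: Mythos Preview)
Your approach is correct in spirit and would work, but it is genuinely different from the paper's. You argue by direct anti-concentration of $\mathrm{Bin}(m,1/2)$: lower-bound the tail $\Pr[\bar X>1/2+\eps]$ via Stirling (or Slud) by something like $c\,m^{-1/2}\exp(-2\eps^2 m)$, and check this exceeds $\delta$ when $m$ is below the claimed threshold. The paper instead runs an \emph{entropy / change-of-measure} argument: it reinterprets the hypothesis as $|B(m,\tfrac12-\eps)|\le\tfrac{\delta}{2}2^m$ for the Hamming ball $B$, introduces a tilted product law $V\in\{0,1\}^m$ with $\Pr[V_i=1]=\tfrac12-\eps$, and sandwiches $H(V)$ between $m(1-4\eps^2)$ (via $H(\tfrac12-\eps,\tfrac12+\eps)\ge 1-4\eps^2$) and $m+1-\Pr(V\in B)\log\tfrac{2}{\delta}$ (via the chain rule conditioned on $\mathbf{1}_{V\in B}$ and $H(\cdot\mid B)\le\log|B|$). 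Combining the two and using that $\Pr(V\in B)\ge 1/6$ (since the binomial median is adjacent to its mean) yields the bound with the stated constants directly.

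What each approach buys: the entropy route avoids Stirling entirely and produces the constants $1/24$ and $-5$ in a couple of clean lines, with no case analysis on the size of $\eps\sqrt m$. Your Stirling route is more elementary conceptually, but note that the single-term lower bound you sketch carries a $1/\sqrt m$ prefactor that is \emph{not} absorbed by the ``$-5$'' slack when $\eps$ is tiny and $m$ is correspondingly large; you would need either to sum $\Theta(\sqrt m)$ neighboring terms, or to observe that the prefactor can be absorbed into the gap between the Stirling exponent constant ($2$) and the target constant ($24$) via $\log(\eps\sqrt m)\le \eps^2 m$ for $\eps\sqrt m\ge 1$ together with a separate treatment of the regime $\eps\sqrt m\le 1$ (where the tail is bounded below by an absolute constant $>\delta$). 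This is all doable, but it is exactly the bookkeeping the entropy argument sidesteps.
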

We first finish the proof of Theorem~\ref{td1} and later prove Lemma~\ref{ld2}.
\paragraph{Proving that $m = \Omega(\frac{\log(1/\delta)}{\eps^2})$.}
Recall that
$$SD(\hat p, u) = \max_{A\subseteq [d]}\left\lvert \hat p(A) - \frac{\lvert A\rvert}{d} \right\rvert.$$
Pick $A=\{1,\ldots,d/2\}$. For $i=1,\ldots m$ let $X_i$ be the indicator of the event that the $i$'th
sample belongs to $A$. Thus, the $X_i$'s are independent uniform zero/one random variables and 
$$\hat p(A) = \frac{\sum_{i=1}^m{X_i}}{m}.$$
Therefore, if $m$ satisfies
$$\Pr_{S\sim u^m}\left(SD\left(\hat p, u\right)\leq \eps\right)\geq 1-\delta,$$ 
then in particular
$$\Pr_{S\sim u^m}\left( \left\lvert\frac{\sum_{i=1}^m{X_i}}{m}-\frac{1}{2}\right\rvert\leq \eps\right)\geq 1-\delta,$$
which, by Lemma~\ref{ld2}, implies that $m\geq \frac{1}{24}\frac{\log(1/\delta)-5}{\eps^2}$.

\paragraph{Proving that $m = \Omega(\frac{d}{\eps^2})$.}
Recall that 
$$SD(\hat p, u) = \frac{1}{2}\sum_{i=1}^d{\lvert\hat p(i) - 1/d\rvert}.$$
Partition $[d]$ to $\frac{d}{2}$ pairs
$$\{1,2\},\{3,4\},\ldots$$
For each pair $\{i,i+1\}$ let $W_i$ be the indicator of the event
$$\lvert\hat p(i) - 1/d\rvert \geq 100\eps/d~~ \mbox{ or } ~~\lvert\hat p(i+1) - 1/d\rvert \geq 100\eps/d.$$
We prove that: 
\begin{equation}\label{eq3}
\mbox{If $m<\frac{1}{24}\cdot\frac{d\log(1/50) - 5}{3\cdot(100\eps^2)}$ then }\Pr\left(\sum_i W_i > d/50\right)\geq{1}/{4}.
\end{equation}
This will finish the proof since it implies that for $m<\frac{1}{24}\cdot\frac{d\log(1/50) - 5}{3\cdot(100\eps^2)}$
$$SD(\hat p , u)= \frac{1}{2}\sum_{i=1}^d{\lvert\hat p(i) - 1/d\rvert} >\frac{1}{2}\cdot\frac{d}{50}\cdot\frac{100\eps}{d}=\eps$$ 
with probability at least $1/4$.

In order to derive~(\ref{eq3}), imagine that $\hat p$ is drawn according to the following
two-step sampling: 
\begin{itemize}
\item sample uniformly $m$ independent pairs from $\{\{1,2\},\{3,4\},\ldots\{d-1,d\}\}$, and
\item randomly replace each sampled pair $\{i,i+1\}$ by either $i$ or by $i+1$, each with probability~$1/2$.
\end{itemize} 
The advantage of this approach is that conditioned on the outcome of step (i), the $W_i$'s
are independent.

Fix some $i$ and consider $\hat p(\{i,i+1\})$ | the fraction of samples
that were equal to $\{i,i+1\}$ in step (i).
We show that for every possible conditioning on $\hat p(\{i,i+1\})$,
the probability that $W_i=1$ is at least $1/50$.
Consider two cases: if $\lvert\hat p(\{i,i+1\}) - {2}/{d}\rvert \geq 200\eps/d$
then $W_i=1$ for any possible replacement of the samples $\{i,i+1\}$ by $i$ or $i+1$ in step (ii). 
Thus, in this case $W_i=1$ with probability $1$.
Otherwise, if $\lvert\hat p(\{i,i+1\}) - {2}/{d}\rvert \leq 200\eps/d$,
then the number of samples equals to $\{i,i+1\}$ satisfies
$$\hat p\{i,i+1\}\cdot m \leq \frac{2+200\eps}{d}m\leq \frac{3}{d}m < \frac{1}{24}\cdot\frac{\log(1/50)-5}{(100\eps)^2}.$$
Therefore, by Lemma~\ref{ld2}, with probability at least $1/50$ in step (ii):
\begin{align*}
&\left\lvert\frac{\hat p(i)}{\hat p(\{i,i+1\})} - 1/2  \right\rvert > 100\eps \tag{by Lemma~\ref{ld2}}\\
&\implies W_i=1 \tag{since $\eps < \frac{1}{200}$, and $\hat p(\{i,i+1\})\geq \frac{2-200\eps}{d}$}\\
\end{align*}
Thus, conditioned on any possible outcome of step (i), the $W_i$'s are $d/2$ independent zero/one
random variables, and each of them is equal to $1$ with probability at least $1/50$.
Therefore, their sum is at least $\frac{d}{50}$ with probability at least $1/4$. 
This finishes the proof of Theorem~\ref{td1}.
\end{proof}
\begin{proof}[Proof of Lemma~\ref{ld2}]
This proof uses Shannon's entropy function and two of basic properties of it:
\begin{itemize}
\item if $p\in\mathbb{R}^n$ is a probability vector then
\begin{equation}\label{eq:h1}
H(p) \leq \log(n).
\end{equation}
\item For every $\eps\in [-1,1]$,
$$H(\frac{1}{2}-\eps,\frac{1}{2}+\eps) = 1 - \log(e)\sum_{k=1}^{\infty}\frac{(2\eps)^{2k}}{2k(2k-1)}.$$
In particular, if $\eps\leq\frac{1}{4}$ then
\begin{equation}\label{eq:h2}
H(\frac{1-\eps}{2},\frac{1+\eps}{2}) \geq 1 - 4\eps^2.
\end{equation}
\end{itemize}

The assumption on $m$ is equivalent to
$\left\lvert\left\{v\in \{0,1\}^m : \lvert \frac{1}{m}\sum_i v_i -\frac{1}{2}\rvert > \eps \right\}\right\rvert \leq \delta 2^m$, which
by symmetry is equivalent to
$$\left\lvert\left\{v\in \{0,1\}^m :  \sum_i v_i  < (\frac{1}{2} - \eps)m \right\}\right\rvert \leq \frac{\delta}{2} 2^m.$$
Let $B(m,r)$ denote the hamming ball $\left\{v\in \{0,1\}^m :  \sum_i v_i  < r\cdot m \right\}$. 
Thus, we wish to show that if $|B(m,\frac{1}{2}-\eps)| < \frac{\delta}{2}2^m$, then $m\geq \frac{1}{24}\frac{\log(1/100\delta)}{\eps^2}$.

Let $V$ be a random vector in $\{0,1\}^m$ such that for every $i$: $V_i=1$ with probability $\frac{1}{2}-\eps$ independently.
We bound $H(V)$ in two different ways.
First:
\begin{align*}
H(V) &= \sum_{i=1}^{m}H(V_i) \tag{by the chain rule and independence of the $V_i$'s}\\
        &= mH(\frac{1}{2}-\eps,\frac{1}{2}+\eps)\\
        &\geq m(1-4\eps^2) \tag{by Equation~\ref{eq:h2}}
\end{align*}
Second, set $B=B(m,\frac{1}{2}-\eps)$ and let ${\bf 1}_B$ denote the indicator of the event ``$V\in B$''.
\begin{align*}
H(V) &= H({\bf 1}_B, V) \\
        &= H({\bf 1}_B) + H(V \vert {\bf 1}_B) \tag{by the chain rule}\\
        &= H({\bf 1}_B) + \Pr(B)H(V \vert B) + \Pr(\bar B)H(V \vert \bar B) \\
        &\leq 1+\Pr(B)\log(|B|) + \Pr(\bar B)\log(\lvert\bar B\rvert) \tag{by Equation~\ref{eq:h1}}\\
        &\leq 1+\Pr(B)\log(\frac{\delta}{2} 2^m) + \Pr(\bar B)\log(2^m) \tag{$\lvert B\rvert \leq \frac{\delta}{2} 2^m,\lvert \bar B\rvert \leq 2^m$ }\\
        &= m+1 - \Pr(B)\log\frac{2}{\delta}.
\end{align*}
Therefore:
$$ m(1-4\eps^2) \leq H(V) \leq m+1-\Pr(B)\log\frac{2}{\delta},$$
which implies that $m\geq \frac{\Pr(B)\log\frac{2}{\delta}-1}{4\eps^2}$. The proof is finished by plugging $\Pr(B)\geq \frac{1}{6}$. That $\Pr(B)\geq \frac{1}{6}$ follows from the fact that either $\lceil (\frac{1}{2}-\eps)m\rceil$ or $\lfloor (\frac{1}{2}-\eps)m\rfloor$ is a median for the distribution $Bin(m,\frac{1}{2}-\eps)$~\cite{KB80}; therefore, since $B=B(m,\frac{1}{2}-\eps)$, either $\Pr(B)\geq\frac{1}{2}$ and we are done, or $\Pr(B) < \frac{1}{2}$ and $\Pr(B(m,\frac{1}{2}-\eps + \frac{1}{m}))\geq\frac{1}{2}$. In either case, a simple calculation shows that $\Pr(B)>\frac{1}{6}$.


\end{proof}

\subsection{The upper bound}
We now prove the upper bound, for which we need the next lemma.

\begin{lemma}\label{lem:aux}
Let $(\Omega,\mathcal{F},\mu)$ and
$(\Omega',\mathcal{F}',\mu')$ be countable\footnote{A similar
statement holds in general.} probability spaces.
Let
$$F_1,F_2,F_3,\ldots  \in \mathcal{F} , \
F'_1,F'_2,F'_3,\ldots \in \mathcal{F}'$$ be so that
$\mu'(F'_{i})\geq 1/2$ for all $i$.
Then
$$\left[\mu \times \mu' \right]\left(\bigcup_i {F_i\times F'_i} \right)\geq
\frac{1}{2}\mu \left( \bigcup_i {F_i} \right),$$
where $\mu\times \mu'$ is the product measure.
\end{lemma}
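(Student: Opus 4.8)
The plan is to reduce the statement to a slice-by-slice computation over the first coordinate. Write $E := \bigcup_i (F_i\times F'_i)\subseteq\Omega\times\Omega'$ for the set whose measure we want to bound, and for $\omega\in\Omega$ let $E_\omega := \{\omega'\in\Omega' : (\omega,\omega')\in E\}$ denote its $\omega$-slice. Since both spaces are countable, Tonelli's theorem (in its elementary form for sums) gives
\[
(\mu\times\mu')(E) \;=\; \sum_{\omega\in\Omega}\mu(\{\omega\})\,\mu'(E_\omega),
\]
and there is no measurability subtlety because $E$ is a countable union of measurable rectangles. So the first step is just to record this identity.

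The second step is to identify $E_\omega$ and bound it. Unwinding the definition, $(\omega,\omega')\in E$ holds exactly when $\omega'\in F'_i$ for some index $i$ with $\omega\in F_i$; hence $E_\omega=\bigcup_{\,i:\,\omega\in F_i} F'_i$. Now comes the one idea in the proof: if $\omega\in\bigcup_i F_i$, then the index set $\{i:\omega\in F_i\}$ is nonempty, so picking any $i_0$ in it we get $E_\omega\supseteq F'_{i_0}$ and therefore $\mu'(E_\omega)\ge\mu'(F'_{i_0})\ge 1/2$ by hypothesis; and if $\omega\notin\bigcup_i F_i$ we simply use $\mu'(E_\omega)\ge 0$.

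The third and final step is to combine the two: restricting the sum above to $\omega\in\bigcup_i F_i$ and using $\mu'(E_\omega)\ge 1/2$ there,
\[
(\mu\times\mu')(E)\;\ge\;\sum_{\omega\in\bigcup_i F_i}\mu(\{\omega\})\cdot\tfrac12\;=\;\tfrac12\,\mu\Bigl(\bigcup_i F_i\Bigr),
\]
which is the assertion. I do not expect a genuine obstacle here; the only point requiring care is the bookkeeping in the second step — that the slice $E_\omega$ is the union of the $F'_i$ over precisely the indices with $\omega\in F_i$, and that at least one such index exists once $\omega\in\bigcup_i F_i$, which is exactly where the hypothesis $\mu'(F'_i)\ge 1/2$ enters. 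For general (uncountable) probability spaces one replaces the sum by $\int_\Omega \mu'(E_\omega)\,d\mu(\omega)$ and invokes Tonelli's theorem together with the measurability of $\omega\mapsto\mu'(E_\omega)$; this is the ``similar statement'' alluded to in the footnote.
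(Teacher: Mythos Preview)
Your proof is correct and follows essentially the same approach as the paper: slice the set $E=\bigcup_i(F_i\times F'_i)$ along the first coordinate, identify the slice at $\omega$ as $\bigcup_{i:\,\omega\in F_i}F'_i$ (the paper calls this $F'(\omega)$), observe that for $\omega\in\bigcup_i F_i$ this slice contains some $F'_{i_0}$ and hence has $\mu'$-measure at least $1/2$, and sum. The paper's proof is a terser version of exactly this argument.
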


We finish the proof of the upper bound and later prove Lemma~\ref{lem:aux}.

For $h \in \hh$, define the event
$$F_h =  \{ Z : |L_Z(h) - L_\D(h)| > \eps  \},$$
and let
$F = \bigcup_{h \in \hh} F_h$.
Our goal is thus to upper bound $\Pr(F)$.
For that, we also define the independent event
$$F'_h = \{ Z' : |L_{Z'}(h)-L_\D(h)| < \eps/2 \}$$
where $Z' = (z'_1,\ldots,z'_m)$ are another $m$ independent samples from $\D$,
chosen independently of $Z$.
We first claim that $\Pr(F'_h) \geq 1/2$ for all $h \in H$.
This follows from Chernoff's bound
(but even Chebyshev's inequality suffices).
Now,
\begin{align*}
\Pr(F)
&\leq 2\Pr \left( \bigcup_{h \in H} F_h \times F'_h \right). \tag{Lemma~\ref{lem:aux}}
\end{align*}
Let $S = Z \cup Z'$, where the union is as multisets.
Imagine that $Z,Z'$ are sampled
as follows. First sample $S$, namely $2m$ independent samples $z_1,\ldots,z_{2m}$ from $\D$. Next sample $Z,Z'$ by uniformly drawing a partition of $S$ to two parts of size $m$.
Thus,
\begin{align*}
2\Pr \left( \bigcup_{h \in H} F_h \times F'_h \right) &=  2 \Ex_{S} \big[
\Ex \big[ 1_{  \exists h \in \hh :
|L_{Z}(h)-L_\D(h)| > \eps , |L_{Z'}(h)-L_\D(h)| < \eps/2 } \big| S \big] \big] \\
& \leq  2 \Ex_{S} \big[
\Ex \big[ 1_{ \exists h \in \hh :
|L_{Z}(h)-L_{Z'}(h)| > \eps/2 } \big| S \big] \big] = \ldots
\end{align*}
For fixed $S$,
let $\F = \F_S$ be the class of functions $f: [2m] \to\{0,1\}$ of
the form
\begin{align*}
f(i) = f_h(i)  = 1_{h(x_i) \neq y_i},
\end{align*}
where $z_i = (x_i,y_i)$, for some $h \in \hh$.

We claim that $\F_S$ has VC dimension at most the
graph dimension of $\hh$. If all the $x_i$ are distinct, then this claim
follows from the definition of graph dimension. In general, if $C\subseteq [2m]$ is
shattered by $\F_S$, and $i,j\in C$ are distinct, then we must have $x_i\neq x_j$,
which implies that $|C|\leq \dim_G(\hh)$.
Indeed, if $x_i=x_j$
then for each $h \in \H$,
at least one of the four patterns on $\{i,j\}$
is missing.
%

Since the graph dimension of $\hh$ is $d$,
by Sauer's lemma, the size of $\F_S$ is at most $(2me/d)^d$.
In addition, for $h \in \hh$,
the values of $L_Z(h)$ and $L_{Z'}(h)$ depend only
on $f_h$ and on how $Z,Z'$ partition $S$:
$$L_Z(h) = \frac{1}{m}\sum_{j \in J_Z} f_h(j):= \sigma_Z(f_h).$$
Thus, we can continue
\begin{align*}
\ldots &=  2 \Ex_{S} \big[
\Ex \big[ 1_{ \exists f \in \F_S :
|\sigma_Z(f) -\sigma_{Z'}(f)| > \eps/2 } \big| S \big] \big] \\
& \leq  2 \Ex_{S} \left[ \sum_{f \in \F_S}
\Pr \big[
|\sigma_Z(f) -\sigma_{Z'}(f)| > \eps/2  \big| S \big] \right] .
\end{align*}

It hence suffices to show that for every fixed $S$:
\begin{equation}\label{eq:qed}
\sum_{f \in \F_S}
\Pr \big[
|\sigma_Z(f) -\sigma_{Z'}(f)| > \eps/2  \big| S \big] \leq 2(2me/d)^d \exp\left(-m\eps^2/8\right).
\end{equation}

Fix $S$ and $f \in \F_S$.
Let $\sigma = \sigma_Z(f)$ and
$\sigma' = \sigma_{Z'}(f)$.
Now, the event that $\lvert\sigma-\sigma'\rvert\geq\eps/2$ can be described as follows.
$\sigma$ is the average of a random sample of size $m$ from $(f(1),\ldots,f(2m))$ without replacements,
and $\sigma'$ is the sum of the complementing sample. Note that $\sigma$ and $\sigma'$ are identically distributed.
By Hoeffding's inequality in the setting without replacement {(see, e.g.~\cite{opac-b1134861})} it holds that for $\mu = \mathbb{E}(\sigma)$
we have that
$$\Pr\left[\lvert \sigma - \mu \rvert \geq \eps/4 \right] \leq \exp\left(-2m(\eps/4)^2\right) = \exp\left(-m\eps^2/8\right).$$
By the union bound, this means that $\lvert\sigma-\sigma'\rvert\geq\eps/2$ with probability at most $2\cdot\exp\left(-m\eps^2/8\right)$.
Equation~\ref{eq:qed} follows since the size of $\F_S$ is at most $(2me/d)^d$.

\begin{proof}[Proof of Lemma~\ref{lem:aux}]
Let $F = \bigcup_i F_i$.
For every $\omega \in F$,
let $F'(\omega)= \bigcup_{i: \omega \in F_i} F'_i$.
As there exists $i$ such that $\omega\in F_i$ it holds that $F'_i\subseteq F'(\omega)$ and hence $\mu'(F'(\omega)) \geq 1/2$.
Thus,
\begin{align*}
\left[\mu \times \mu'\right] \left(\bigcup_i {F_i\times F'_i} \right)
= \sum_{\omega \in F} \mu(\{\omega\}) \cdot \mu'(F'(\omega))
\geq \sum_{\omega \in F} \mu(\{\omega\})/2 = \mu(F)/2.
\end{align*}
\end{proof}

\end{document}